\declaretheorem[numberwithin=section]{thm}
\declaretheorem[sibling=thm]{theorem}
\declaretheorem[sibling=thm]{lemma}
\declaretheorem[]{assumption}
\newcommand{\Mset}{\mathbb{M}}
\newcommand{\D}{\mathcal{D}}
\newcommand{\C}{\mathcal{C}}
\newcommand{\cO}{\mathcal{O}}
\newcommand{\X}{\mathcal{X}}
\newcommand{\Prob}{\mathcal{P}}
\newcommand{\depth}{D}
\renewcommand{\P}{\mathbb{P}}
\newcommand{\Reals}{\mathbb{R}}
\DeclareMathOperator*{\EV}{\mathbb{E}}
\DeclareMathOperator{\EVP}{\mathbb{E}_{\Prob}}
\DeclareMathOperator*{\argmax}{arg\,max}
\DeclareMathOperator{\KL}{KL}
\DeclareMathOperator{\reg}{Reg}
\DeclareMathOperator{\dec}{dec}
\DeclareMathOperator{\tree}{\texttt{tree}}
\DeclareMathOperator{\est}{est}
\DeclareMathOperator{\cls}{cls}
\icmltitlerunning{A Classification View on Meta Learning Bandits}
\begin{document}

\twocolumn[
\icmltitle{A Classification View on Meta Learning Bandits}




\begin{icmlauthorlist}
\icmlauthor{Mirco Mutti}{yyy}
\icmlauthor{Jeongyeol Kwon}{comp}
\icmlauthor{Shie Mannor}{yyy,sch}
\icmlauthor{Aviv Tamar}{yyy}
\end{icmlauthorlist}

\icmlaffiliation{yyy}{Technion -- Israel Institute of Technology}
\icmlaffiliation{comp}{University of Wisconsin-Madison}
\icmlaffiliation{sch}{NVIDIA Research}

\icmlcorrespondingauthor{Mirco Mutti}{mirco.m@technion.ac.il}

\icmlkeywords{Machine Learning, ICML}

\vskip 0.3in
]



\printAffiliationsAndNotice{\icmlEqualContribution} 

\begin{abstract}
Contextual multi-armed bandits are a popular choice to model sequential decision-making. {\it E.g.}, in a healthcare application we may perform various tests to asses a patient condition (exploration) and then decide on the best treatment to give (exploitation). 
When humans design strategies, they aim for the exploration to be \emph{fast}, since the patient's health is at stake, and easy to \emph{interpret} for a physician overseeing the process. However, common bandit algorithms are nothing like that: The regret caused by exploration scales with $\sqrt{H}$ over $H$ rounds and decision strategies are based on opaque statistical considerations. In this paper, we use an original \emph{classification view} to meta learn interpretable and fast exploration plans for a fixed collection of bandits $\Mset$. The plan is prescribed by an interpretable \emph{decision tree} probing decisions' payoff to classify the test bandit. The test regret of the plan in the \emph{stochastic} and \emph{contextual} setting scales with $\cO (\lambda^{-2} C_{\lambda} (\Mset) \log^2 (MH))$, being $M$ the size of $\Mset$, $\lambda$ a separation parameter over the bandits, and $C_\lambda (\Mset)$ a novel \emph{classification-coefficient} that fundamentally links meta learning bandits with classification. Through a nearly matching lower bound, we show that $C_\lambda (\Mset)$ inherently captures the complexity of the setting.
\end{abstract}

\section{Introduction}

In the \emph{Multi-Armed Bandits} model~\citep[MAB,][]{lattimore2020bandit}, a decision-maker, called the \emph{agent}, faces a collection of unknown probability distributions over reals, called \emph{arms}, representing alternative decisions and their corresponding payoff (a.k.a. \emph{reward}), which the agent repeatedly takes, or \emph{pulls}, to maximize the mean cumulative reward collected over time. In some settings, called \emph{contextual} MABs~\citep{audibert2010best}, the reward of an arm depends also on a \emph{context}, a vector of features that the agent observes before deciding which arm to pull. The main challenge in MABs is how to pull arms in a way that effectively balances information gathering (called \emph{exploration}) and immediate rewards (called \emph{exploitation}). 

\begin{figure}[t]
    \centering
    \includegraphics[width=\linewidth]{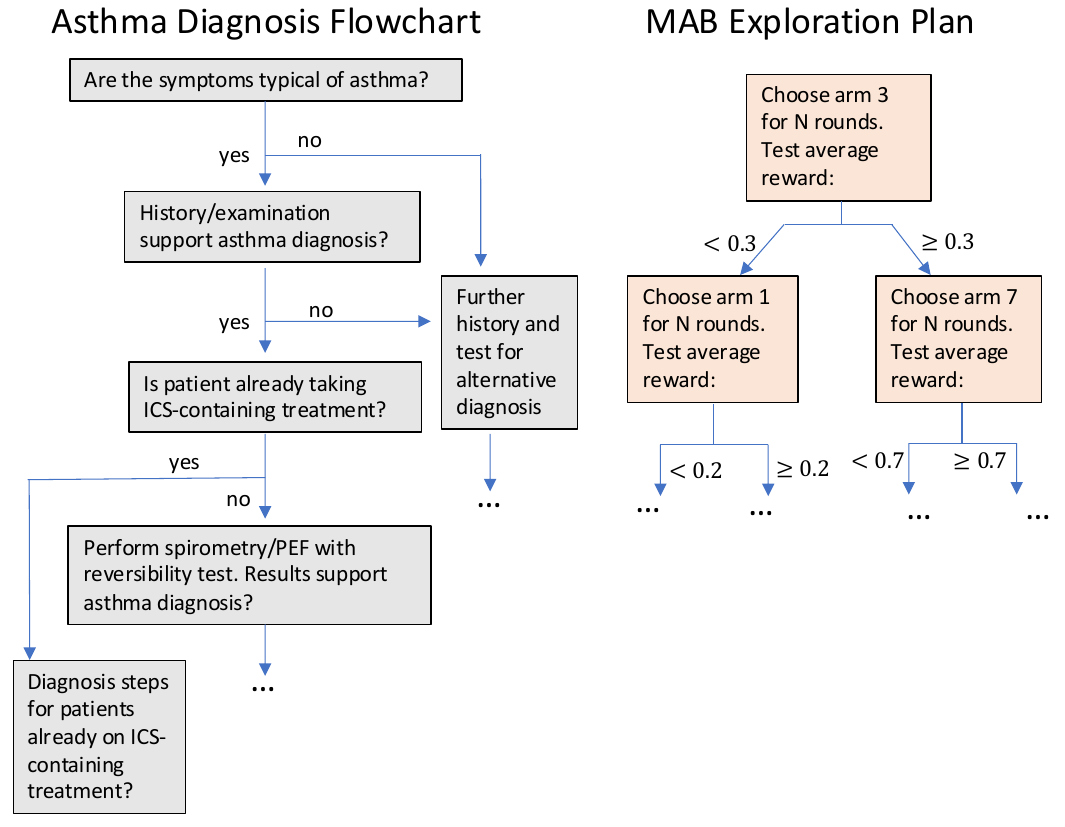}
    \caption{Left: An excerpt from a clinical flowchart for the medical diagnosis of Asthma~\citep{gif2023global}. Right:  Illustration of an interpretable exploration plan for a MAB.}
    \label{fig:motivation}
\end{figure}
A multitude of decision-making problems, ranging from recommender systems~\citep{li2010contextual} to treatment allocation~\citep{berry1978modified}, pricing of goods~\citep{rothschild1974two}, advertising~\citep{schwartz2017customer}, can be modelled as MAB problems. However, although the problem structure is fitting, typical MAB algorithms are often very different from human-designed decision plans. For example, consider the clinical diagnosis plan illustrated in Figure~\ref{fig:motivation} (left). In machine learning parlance, this plan takes several exploration actions (diagnosis tests) to yield a diagnosis, which will later be treated by appropriate medical actions (exploitation).
It is clear that (i) the plan is \textit{short} -- fast diagnosis is imperative; and (ii) the plan is \textit{interpretable}, and can be easily communicated both to physicians and patients. Our goal in this work is to develop a framework for short and interpretable action plans in the setting of MABs. 

To this end, we consider the \emph{stochastic} contextual MAB formulation, a model of non-adversarial problems whose theoretical barriers are well-understood~\citep{lai1985asymptotically, auer2002finite-time}. 
Even when the context is fixed, the \emph{regret} the agent has to pay, defined as the difference between the cumulative reward of their decisions and those of the optimal strategy, inevitably scales with $\sqrt{KH}$ in the worst case, being $H$ and $K$ the number of pulls and arms respectively.
The latter rate might not be compelling enough in settings in which the regret translates to money losses, such as in pricing or advertising scenarios, or even a negative impact on a patient's health condition, like in the clinical diagnosis problem mentioned above.

Faster performance is possible when prior knowledge about the \emph{class} of bandits the agent faces may be available, such as from historical data or powerful simulators. For example, Thompson sampling~\citep{thompson1933} allows to exploit a prior distribution over the problem parameters through a Bayesian-inspired approach. In favorable circumstances, the latter yields an \emph{average} regret rate that is at most logarithmic in the number of arms $K$~\citep{russo2016information}. Another formulation, called \emph{latent bandits}~\citep{maillard2014latent, hong2020latent}, assumes that the problem parameters are coming from a finite collection of bandits. The latter allows to trade a factor of $\sqrt{K}$ with $\sqrt{M}$ in the regret, being $M$ the number of bandits in the collection. 

Here we consider a \emph{meta learning} version of latent bandits. We can interact with the collection of bandits to meta-train an algorithm that is then tested against one bandit in the collection, whose identity is not revealed to the algorithm. Unfortunately, any prior knowledge we can extract at meta training cannot improve the $\sqrt{MH}$ rate in the \emph{worst case}, which holds even for a collection of two bandits~\citep{lattimore2020bandit}. This changes when we assume that the bandits in the collection are meaningfully different, {\it i.e.}, the reward distribution of their arms have some statistical \emph{separation}~\citep{chen2021understanding, mutti2024test-time}. The separation condition is relevant in practice: If two patients do not respond differently to at least one treatment, there is little point in modeling them with different bandits. 
Whereas this can help achieving fast rates, previous work, either with or without separation, do not yield interpretable plans.

To design interpretable exploration plans for bandits, our main technical contribution is connecting ideas from the classification literature to MAB analysis. In principle, the idea is to take advantage of separation to explicitly \emph{classify} the test task from data with high probability, and then exploit the optimal strategy for the classified task. This \emph{classification view} allows to break the common barriers for meta learning bandits, while providing an elegant and original characterization of the regret dynamics under separation. 

The contributions of the paper are organized as follows. In Section~\ref{sec:problem_setting}, we describe problem of meta learning bandits and the separation condition. In Section~\ref{sec:ECE}, we formalize the classification view of MABs by introducing a novel measure of complexity, the \emph{classification-coefficient} $C_\lambda (\Mset)$ for a $\lambda$-separated set of bandits $\Mset$ and a space of tests $\Pi_{\C}$, which captures the hardness of the learning problem: When $\Mset$ is known, a simple \emph{Explicit Classify then Exploit} (ECE) procedure, which runs a classification algorithm to classify the test task and then exploits the optimal policy of the classified task, achieves a test regret of $\cO (\lambda^{-2} C_\lambda (\Mset) \log^2 (MH))$ over $H$ rounds. Through a sample complexity lower bound to identify the optimal policy at test time, 
we show that the factor $\lambda^{-2} C_\lambda (\Mset)$ is indeed unavoidable in the worst case. In Section~\ref{sec:ECE_implementation}, we provide a practical implementation of ECE with \emph{decision trees}  -- a standard tool in interpretable decision making~\cite{bressan2024theory} -- that nearly matches the regret above while yielding a fully interpretable exploration plan (like in Figure~\ref{fig:motivation} right). The latter is robust to misspecifications of $\Mset$, which is estimated through a tractable meta training routine. Notably, all of our results hold for the contextual setting. Section~\ref{sec:experiments} provides numerical experiments that showcase our algorithms against UCB/TS-like approaches for latent bandits~\citep{hong2020latent}. Section~\ref{sec:related_work} is dedicated to related works. The proofs of the theorems are in the Appendix.

\section{Problem setting}
\label{sec:problem_setting}

Let us consider a finite collection of contextual bandit problems $\Mset := \{ \nu_i \}_{i \in [M]}$, where $[M] = \{1, \ldots, M\}$. Each bandit instance $\nu_i$, which we will sometimes call a \emph{task}, is a \emph{linear contextual bandit}~\citep{wang2005bandit} that maps an action $k \in [K]$ and context $x \in \X \subseteq \Reals^d$ into a reward distribution $\nu_i (x, k) = x^\top \theta_{ik} + \eta_{ik}$, where $\theta_{ik} \in \Reals^d$ is a vector of parameters and $\eta_{ik}$ is a (subgaussian) random noise with zero mean and variance $\sigma_{ik}^2\leq \sigma^2$. A special yet important case is when the space of contexts is a singleton $\X = \{ x \}$, which we call \emph{non-contextual} bandit, or just bandit for simplicity.

Following a typical \emph{stochastic} bandit setup~\citep{lattimore2020bandit}, the decision maker, i.e., the \emph{agent}, interacts with a bandit $\nu_i \in \Mset$, which identity is not revealed to the agent. The interaction protocol goes as follows: At each step $t > 0$, the agent observes a context $x_t \in \X$ drawn from some fixed distribution $\Prob$, it selects an arm $k_t \in [K]$, and it collects a reward $r_t \sim \nu_i (x_t, k_t)$. The agents decides the arm to pull according to a policy $\pi : \X \to [K]$, a mapping between contexts and arms, which the agent updates given previous observations of contexts and rewards.

The goal of the agent is to maximize the cumulative reward collected over a time horizon $H$ or, equivalently, to minimize the \emph{regret} of pulling an arm other than the optimal one. For instance, to minimize the number of times a treatment different from the optimal one is administered to a patient. Since the identity of the bandit problem (unobserved characteristic of the patient in the example) is hidden to the agent, the regret is typically computed over the worst-case task in $\Mset$. Formally, the \emph{worst-case} regret is given by
\begin{equation}
    \reg_H (\Mset) := \sup_{\nu_i \in \Mset} \EV \bigg[\sum_{t \in [H]} \max_{k \in [K]} x_t^\top \theta_{i k} - r_t \bigg]
\label{eq:mab_regret}
\end{equation}
where the contexts $x_1, \ldots x_H$ are sampled independently from the fixed distribution $\Prob$ and $r_t \sim \nu_i (x_t, k_t)$ being $k_t \sim \pi (x_t)$ the arm pulled by the agent.

In this paper, we consider a \emph{meta learning} variation ({\it e.g.}, \citealt{cella2020meta, kveton2020meta}) of the common bandit setup described above. The learning setting (Figure~\ref{fig:setting}) is composed of two separate and consecutive stages, which we call \emph{meta training} and \emph{test}, respectively.

\begin{figure}[t]
    \centering
    \includegraphics[width=0.8\linewidth]{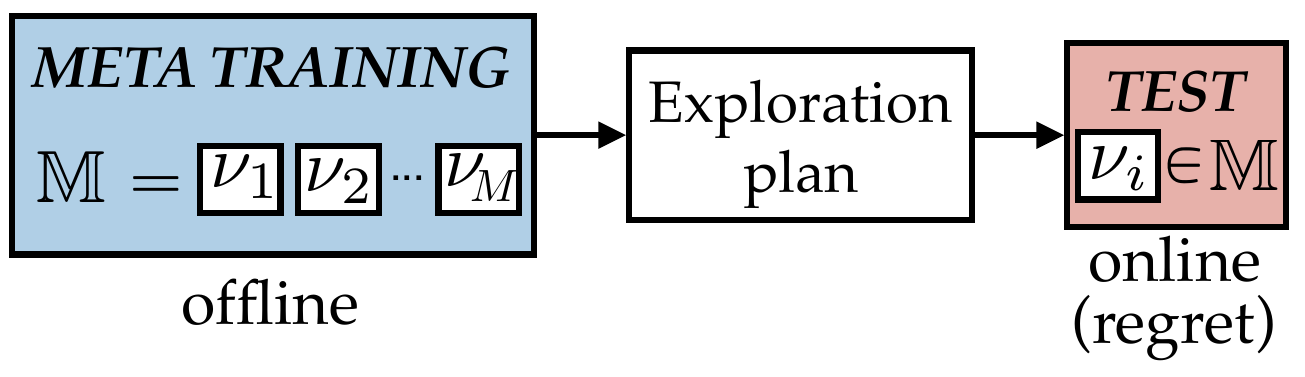}
    \vspace{-5pt}
    \caption{The meta learning bandits problem setting.}
    \label{fig:setting}
\end{figure}

\textbf{Meta training.}~~In the first stage, the agent can interact \emph{offline} with the set of bandits $\Mset$. 
Differently from a \emph{pure exploration} setup~\citep{audibert2010best}, here we interact with a set of bandits instead of a single one. We are not just interested in discovering an optimal policy for each bandit, but also to devise an \emph{exploration plan}, which we denote as $\texttt{Plan} (\Mset)$, that we can transfer to the test phase to minimize the regret. Since the meta training itself happens entirely offline, no regret is incurred at this stage. In practice, this is reasonable when working with a simulator or previously collected data, such as an historical record of treatments administered to patients. However, we may operate under resource constraints, so that it is important to investigate the sample and computational complexity of meta training.

\textbf{Test.}~~In the second stage, the agent faces a single and unknown bandit task $\nu_i \in \Mset$, which we call the \emph{test} task, with the goal of minimizing the regret~\eqref{eq:mab_regret}. This matches the \emph{stochastic} bandit setting exactly, except that the learning algorithm takes decisions according to the exploration plan devised during meta training, {\it i.e.}, $k_t \sim \texttt{Plan} (\Mset)$. Whereas the plan is fixed \emph{a priori}, it is still \emph{adaptive}, as it conditions the decisions with the history of interactions in the test task. For instance, the plan can be a strategy to administer treatments to a patient informed by historical data.


What are the theoretical barriers for the described problem of meta learning bandits?
A natural question is whether the meta training can benefit the test regret in a substantial way.
Perhaps unsurprisingly,\footnote{The result is obvious from classical minimax lower bound constructions for stochastic bandits. See the one in Chapter 15 of~\citet{lattimore2020bandit} for a gentle introduction.} without any assumption on how the collection of bandits $\Mset$ is constructed, the meta learning problem is not easier than the classical stochastic bandit.
\begin{theorem}[\citealt{lai1985asymptotically}]
    Let $\Mset$ a set of $M \geq 2$ bandits and let $\X = \{x \}$ be a singleton. The test regret is $\reg_H (\Mset) = \Omega (\sqrt{MH})$.  
\end{theorem}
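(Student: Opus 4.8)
The plan is to reduce to the classical minimax lower bound for stochastic $K$-armed bandits by exhibiting a specific hard collection $\Mset$ and arguing that the offline meta training stage provides no useful information about the identity of the test task. First I would construct the family: take $K = M$ arms and, for each $i \in [M]$, let $\nu_i$ be the (non-contextual) bandit in which arm $i$ yields reward $\mathcal{N}(\Delta, 1)$ while every other arm yields $\mathcal{N}(0,1)$, for a gap $\Delta > 0$ to be tuned. This collection is completely symmetric: every arm is optimal for exactly one task and suboptimal for all the others. Consequently, although the agent may inspect all of $\Mset$ during meta training, this reveals nothing about which $\nu_i$ will be presented at test time; the exploration plan $\texttt{Plan}(\Mset)$ is committed before the test task is selected, and the worst-case supremum in~\eqref{eq:mab_regret} ranges over all $M$ candidates. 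This is exactly what makes the meta training harmless and lets the classical argument go through.

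Next I would run the standard change-of-measure argument. Fix any plan and let $\P_i$ denote the law of the length-$H$ test interaction under $\nu_i$, with $T_i$ the (random) number of pulls of arm $i$; introduce the reference instance $\nu_0$ in which all $M$ arms yield $\mathcal{N}(0,1)$. Under $\nu_0$ a pigeonhole argument gives an arm $i$ with $\EV_0[T_i] \leq H/M$. The key quantitative step is the divergence decomposition $\KL(\P_0 \Vert \P_i) = \EV_0[T_i]\,\KL(\mathcal{N}(0,1)\Vert\mathcal{N}(\Delta,1)) = \tfrac{1}{2}\Delta^2\,\EV_0[T_i]$, followed by the Bretagnolle--Huber (or Pinsker) inequality to bound how much the pull count of arm $i$ can grow when switching from $\nu_0$ to $\nu_i$: roughly $\EV_i[T_i] \lesssim \EV_0[T_i] + H\sqrt{\Delta^2\,\EV_0[T_i]}$. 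Since the test regret under $\nu_i$ is at least $\Delta\,(H - \EV_i[T_i])$, substituting $\EV_0[T_i] \leq H/M$ yields a regret bound of order $\Delta H\,(1 - 1/M - \Delta\sqrt{H/M})$. Finally I would tune the gap: choosing $\Delta \asymp \sqrt{M/H}$ balances the two negative terms and leaves a regret of order $\sqrt{MH}$ for the selected task $\nu_i$; taking the supremum over $\Mset$ then gives $\reg_H(\Mset) = \Omega(\sqrt{MH})$, as claimed.

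The only genuinely delicate point, and the step I would be most careful about, is the first one: making precise that conditioning the plan on the entire meta-training interaction with $\Mset$ cannot improve the test regret. The clean way to see this is that the lower bound is proved for an \emph{arbitrary} (history-dependent) test-time policy, so any information a plan could have distilled from meta training is already subsumed by quantifying over all policies; the symmetry of the construction guarantees that no such policy can simultaneously exploit the best arm across all $M$ tasks. Everything else is the textbook computation referenced in the footnote.
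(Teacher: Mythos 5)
Your proposal is correct and is essentially the argument the paper itself points to: the paper gives no formal proof, only a footnote deferring to the classical minimax construction of Chapter 15 of \citet{lattimore2020bandit} and a one-line sketch of a hard instance in which bandits differ on a pair of arms by an $H$-dependent gap, which is exactly your family with $K=M$, gap $\Delta \asymp \sqrt{M/H}$, and the standard pigeonhole plus divergence-decomposition plus Bretagnolle--Huber computation. Your added observation that offline meta training is subsumed by quantifying over all history-dependent test-time policies is the right way to make the footnote's claim precise.
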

The latter can be proved through a hard instance in which the two bandits are identical expect for a pair of arms whose mean reward differ for a small quantity depending on $H$. In many scenarios, those instances have limited interest, as we may model the pair of bandits with a single task, at the cost of a (bounded) sub-optimality. Similarly to previous meta learning settings~\citep{chen2021understanding, mutti2024test-time}, we consider a \emph{separation} assumption built on this premise.
\begin{assumption}
    \label{assumption:distance_separation}
    For all $i\neq j \in [M]$ and a policy class $\Pi$, there exists at least one policy $\pi \in \Pi$, s.t. $D_\texttt{H}(\mathbb{P}_i^\pi, \mathbb{P}_j^\pi) \ge \lambda$, where $D_{\texttt{H}}$ is the Hellinger distance and $\P_i^\pi, \P_j^\pi$ are the joint context-arm-reward distributions induced by $\pi$ in $\nu_i, \nu_j$. 
\end{assumption}

The separation guarantees that the bandits in the collection are meaningfully different, such as assuming that different patient groups respond differently to at least one treatment. 

We have now a formal picture of the setting we consider: Meta learning bandits under separation. Before going ahead with the investigation of the setting, we introduce additional notation for later use.

\textbf{Notation.}~~
We will consider a fixed context distribution $\Prob$ for both meta training and test stages.
For a random variable $A$ and event $\mathcal{E}$, we use $\EVP [A], \P_{\Prob} [\mathcal{E}]$ as shortcuts for $\int_{x \in \X} \Prob (x) \EV [A | x] dx$ and $\int_{x \in \X} \Prob (x) \mathbb{P} (\mathcal{E} | x) dx$ respectively. For any finite set $S$, we denote $2^{S}$ the powerset of $S$. For any two probability distributions $p,q$ over some measurable space $\mathcal{X}$, let $D_{\texttt{H}}(p,q) := \int_{x \in \mathcal{X}} \left(\sqrt{p(x)}-\sqrt{q(x)} \right)^2 dx$ be the Hellinger distance between them.  
For every $\nu_i \in \Mset$, we denote $\mu_{ik} = \EVP [ x^\top \theta_{ik}]$ the mean of $r \sim \nu_i (x, k)$ for $x \sim \Prob$. We further assume $x^\top \theta_{ik} \in [0,1]$ and both $\| x \|_1, \| \theta_{ik} \|_1$ to be bounded. We denote as $\Pi$ the space of policies and the \emph{optimal policy} $\pi^*_i (x) := \argmax_{\pi \in \Pi} x^\top \theta_{i \pi(x)}$, playing the arm $k^*_i \in \argmax_{k \in [K]} x^\top \theta_{ik}$ with the optimal mean reward for any $x \in \X$. For a bandit $\nu_i \in \Mset$ and policy $\pi \in \Pi$, we denote $\mathbb{P}_i^\pi$ the joint distribution of context-arm-rewards. 
The \emph{action gap} of bandit $\nu_i$ and context $x$ is denoted $\Delta_i (x, k) := x^\top \theta_{ik^*} - x^\top \theta_{ik}$ and we define $\Delta := \min_{i \in [M], x \in \X, k \in [K]} \Delta_i (x, k)$.\footnote{Note that, whenever the context vector is the zero vector, the gap $\Delta_i$ collapses to zero for every $i$. We assume that the space of contexts $\X$ is designed properly, so that it does note include such dummy context vectors.}

\section{Meta learning bandits with classification}
\label{sec:ECE}

In this section, we present a framework to study
meta learning bandits under separation through the lenses of multi-class classification. First, we analyze the regret of a strategy, {\it i.e.}, an exploration plan $\texttt{Plan} (\Mset)$, based on classifying the test task to then exploit the optimal policy of the classified task. Then, we show that classifying the test is necessary for regret minimization under separation. As we shall see, the two results are brought together by a novel measure of complexity, which we call the \emph{classification-coefficient}.

For the ease of presentation, we assume to know the true distributions of all bandits $\nu_i \in \Mset$, and we leave the study of misspecifications to later sections. We consider classification algorithms in the following interaction protocol:
\begin{enumerate}[itemsep=0pt, topsep=0pt, leftmargin=*]
    \item Start with $t=0$ and an initial hypothesis class $S_0 = \{1,2,...,M\}$.
    \item Terminate if $|S_t| = 1$. Otherwise, decide on a classification test $\pi_t \in \Pi_{\mathcal{C}}$  (either deterministically or randomly) from the set of tests $\Pi_{\mathcal{C}}$, and draw $N_{\cls} = \tilde{O}(\lambda^{-2})$ samples with $\pi_t$. 
    \item Update the hypothesis class $S_{t+1}$ with the generated samples. $t \leftarrow t+1$ and go to Step 2.
\end{enumerate}
The complexity of classification depends on how many hypotheses we can rule out from a test $\pi_t$ from the remaining hypotheses each round. As we are allowed to use $\tilde{O}(\lambda^{-2})$ samples, we can at least rule out $\lambda$-separated hypotheses from the underlying instance. Specifically, given the remaining hypothesis class $S_t \in 2^{[M]}$ and the underlying instance $i$, we can remove $\bar{S}_{t,\lambda}^{\pi}(i) := \{m\in S_t | D_{\texttt{H}}(\mathbb{P}_i^\pi, \mathbb{P}_m^\pi) \ge \lambda \}$ through hypothesis testing ({\it e.g.,} using likelihood ratio test). 

To formalize the concept, we define the deterministic \emph{classification-coefficient}:
\begin{align}
    C_{\lambda} (\Pi_{\mathcal{C}}) := \max_{S \in 2^{[M]}, |S|>1} \min_{\pi \in \Pi_{\mathcal{C}} } \max_{i \in S} \frac{|S|}{ |\bar{S}_\lambda^{\pi}(i) | }, \label{eq:det_classification_coeff}
\end{align}
and the randomized \emph{classification-coefficient}:
\begin{align}
    \widetilde{C}_{\lambda} (\Pi_{\mathcal{C}}) := \hspace{-5pt} \max_{S \in 2^{[M]}, |S|>1} \min_{p \in \Delta(\Pi_{\mathcal{C}}) } \max_{i \in S} \frac{|S|}{ \mathbb{E}_{\pi \sim p} [|\bar{S}_\lambda^{\pi}(i)|] }, \label{eq:ran_classification_coeff}
\end{align}
In essence, these coefficients measure the classification complexity of a class of bandits through the pessimistic rounds of classification, where $S$ is the worst-case remaining hypotheses when the test task is $i$, and $\pi, p$ are the optimal deterministic and randomized greedy strategies, respectively. The latter take the test (resp. distribution over tests) inducing the most even split (resp. expected split) of the remaining hypotheses $S$. Interestingly, we can derive an upper bound on the size of the split when employing the deterministic greedy strategy
\begin{align*}
    \mathbb{E} \left[ \frac{|S_{t+1}|}{|S_t|} \Big| S_t \right] \le 1 - \frac{1}{2} C_{\lambda}(\Pi_{\mathcal{C}})^{-1}.
\end{align*}
Clearly, the smaller the classification-coefficients, the more hypothesis we can rule out in a single round, the easier it is to classify the test task. In the following result, we formally link the complexity of classification with the regret.

\begin{algorithm}[t]
    \caption{Explicit Classify then Exploit}
    \label{alg:ECE}
    \begin{small}
    \begin{algorithmic}[1]
        \STATE \textbf{input} set of tasks $\Mset$, $N_{\cls}$
        \STATE Initialize $S_0 = [M], t=0$ \hfill \texttt{Explicit Classify}
        \WHILE{$|S_t| > 1$}
            \STATE $\pi_t = \max_{\pi \in \Pi_{\mathcal{C}}} \min_{i \in S_t} |\bar{S}^{\pi}_{t,\lambda} (i)|$
            \STATE{$\mathcal{D}_t\leftarrow$ $N_{\cls}$ i.i.d. samples drawn with $\pi_t$}
            \STATE{Get $S_{t+1}$ with Algorithm \ref{algo:LLT}}
            \STATE{$t \leftarrow t+1$}
        \ENDWHILE
        \STATE Extract the classified task $m^* \in S_t$ and execute $\pi^* (x) = \argmax_{\pi \in \Pi}  \nu_{m^*} (x, k)$ for the remaining steps \hfill \texttt{Exploit}
    \end{algorithmic}
    \end{small}
\end{algorithm}
\begin{algorithm}[t]
    \caption{Update Remaining Hypotheses}
    \label{algo:LLT}
    \begin{small}
    \begin{algorithmic}[1]
        \STATE \textbf{input} set of tasks $S_t$, test $\pi_t$, samples $\mathcal{D}_t$
        \STATE Let $\ell_i = \sum_{(x,r) \in \mathcal{D}_t} \log (\mathbb{P}_i^{\pi_t} (x,r))$ for all $i \in S_t$
        \STATE Let $\hat{m} = \arg\max_{i \in S_t} \ell_i$
        \STATE \textbf{return} $S_{t+1} \leftarrow \{i \in S_t | \ell_i \ge \ell_{\hat{m}} - 3 \log (M/\delta) \}$
    \end{algorithmic}
    \end{small}
\end{algorithm}

To this end, we consider a simple algorithm, called \emph{Explicit Classify then Exploit} (ECE, Algorithm~\ref{alg:ECE}), which is based on the classification protocol described above to classify the test task (lines 2-8), then deploying the optimal policy for the classified task (line 9). We can prove the following.
\begin{theorem}
    \label{theorem:classification_upper_bound}
    Suppose Assumption \ref{assumption:distance_separation} holds with a test class $\Pi_\mathcal{C}$ and a family of $M$ bandit instances $\Mset$. Then  with probability at least $1-\delta$, the while-loop in Algorithm \ref{alg:ECE} ends after $T$ rounds with $N_{\cls}$ samples per round where 
    \begin{align}
        T &= \cO\left( C_{\lambda}(\Pi_{\mathcal{C}}) \log (M/\delta) \right), \nonumber \\
        N_{\cls} &= \cO\left( \log(M/\delta) / \lambda^2 \right). \label{eq:classification_bound}
    \end{align} 
    Consequently, the expected test regret of Algorithm \ref{alg:ECE} for $H$ steps is
    \begin{equation*}
        \reg_H (\Mset) \le \cO \left( \frac{C_{\lambda}(\Pi_{\mathcal{C}}) \log^2( M/\delta)}{\lambda^2} \right) + \delta H.
    \end{equation*}
\end{theorem}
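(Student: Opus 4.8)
The plan is to decompose the argument into (i) a high-probability correctness analysis of the classification loop (Algorithm~\ref{algo:LLT}), (ii) a combinatorial bound on the number of rounds through $C_\lambda(\Pi_{\mathcal{C}})$, and (iii) a conversion from rounds to regret. Throughout, let $i^*$ denote the (unknown) index of the test task. I would first fix the good event $\mathcal{E}$ on which, in every round $t$ of the while-loop, (a) the true task is retained, $i^* \in S_{t+1}$, and (b) every $\lambda$-separated competitor is purged, $\bar{S}^{\pi_t}_{t,\lambda}(i^*) \cap S_{t+1} = \emptyset$.

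To control $\mathcal{E}$, the key tool is a two-sided concentration of the log-likelihood ratio $\ell_m - \ell_{i^*} = \sum_{(x,r)\in\mathcal{D}_t}\log(\P_m^{\pi_t}(x,r)/\P_{i^*}^{\pi_t}(x,r))$ under $N_{\cls}$ i.i.d. samples from the true $\P_{i^*}^{\pi_t}$. For retention, a Markov bound on the square-root likelihood ratio yields $\P[\ell_m-\ell_{i^*}\ge 3\log(M/\delta)]\le (\int\sqrt{\P_{i^*}^{\pi_t}\P_m^{\pi_t}})^{N_{\cls}}(M/\delta)^{-3/2}\le (M/\delta)^{-3/2}$, so $\ell_{i^*}\ge\ell_{\hat m}-3\log(M/\delta)$ and $i^*$ clears the threshold of Algorithm~\ref{algo:LLT}. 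For elimination, when $D_{\texttt{H}}(\P_{i^*}^{\pi_t},\P_m^{\pi_t})\ge\lambda$ the Bhattacharyya affinity is bounded away from one, and choosing $N_{\cls}=\cO(\log(M/\delta)/\lambda^2)$ drives $\ell_m-\ell_{i^*}$ below $-3\log(M/\delta)$ with probability at least $1-(M/\delta)^{-3/2}$; since $\ell_{\hat m}\ge\ell_{i^*}$, such $m$ falls below $\ell_{\hat m}-3\log(M/\delta)$ and is removed. A union bound over the at most $M$ indices and the at most $T$ rounds then gives $\P[\mathcal{E}]\ge 1-\delta$.

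On $\mathcal{E}$ I would bound the number of rounds via the classification-coefficient. Rewriting the definition gives $C_\lambda(\Pi_{\mathcal{C}})=\max_{S}|S|/(\max_{\pi}\min_i|\bar{S}^{\pi}_\lambda(i)|)$, and since the greedy rule on line~4 selects exactly the test attaining $\max_\pi\min_i|\bar{S}^{\pi}_{t,\lambda}(i)|$, we get $|\bar{S}^{\pi_t}_{t,\lambda}(i^*)|\ge\min_i|\bar{S}^{\pi_t}_{t,\lambda}(i)|\ge |S_t|/C_\lambda(\Pi_{\mathcal{C}})$. Because all these indices are purged on $\mathcal{E}$, the hypothesis set shrinks multiplicatively, $|S_{t+1}|\le|S_t|(1-1/C_\lambda(\Pi_{\mathcal{C}}))$, matching the stated expected-split inequality up to the confidence factor $1/2$. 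Iterating from $|S_0|=M$ gives $|S_T|\le M\exp(-T/C_\lambda(\Pi_{\mathcal{C}}))$, so the loop reaches $|S_T|=1$ after $T=\cO(C_\lambda(\Pi_{\mathcal{C}})\log(M/\delta))$ rounds, establishing \eqref{eq:classification_bound}.

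Finally I would translate rounds into regret. On $\mathcal{E}$ the loop runs for $T\cdot N_{\cls}=\cO(C_\lambda(\Pi_{\mathcal{C}})\log^2(M/\delta)/\lambda^2)$ steps, each contributing per-step regret at most $1$ since $x^\top\theta_{ik}\in[0,1]$; thereafter the classified index $m^*=i^*$ is correct, so the exploited policy is optimal and the exploitation phase incurs zero regret. On the complementary event, of probability at most $\delta$, the regret is trivially bounded by $H$. Combining the two cases gives $\reg_H(\Mset)\le\cO(C_\lambda(\Pi_{\mathcal{C}})\log^2(M/\delta)/\lambda^2)+\delta H$. I expect the main obstacle to be the two-sided likelihood-ratio concentration made uniform over the data-dependent sequence of tests and hypothesis sets: the retention and elimination guarantees must hold simultaneously across all rounds, which requires a union bound whose size (and hence both the threshold $3\log(M/\delta)$ and the power of $\lambda$ in $N_{\cls}$) is calibrated against the round count $T$.
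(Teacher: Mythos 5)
Your proof is correct and follows essentially the same route as the paper's: retention of the true task and elimination of $\lambda$-separated hypotheses via Chernoff-type bounds on (square-root) likelihood ratios with threshold $3\log(M/\delta)$, a per-round multiplicative shrinkage of $|S_t|$ by $1 - 1/C_{\lambda}(\Pi_{\mathcal{C}})$ from the greedy test choice, and the standard good-event/bad-event split for the regret. The only cosmetic difference is that you iterate the shrinkage deterministically on the good event (using $T \le M$ for the union bound over rounds), whereas the paper packages the same per-round inequality into a super-martingale and invokes Ville's inequality; both yield $T = \cO(C_{\lambda}(\Pi_{\mathcal{C}})\log(M/\delta))$.
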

The theorem states that we can identify the test task w.h.p. taking $N_{\cls}T = O(\lambda^{-2} \cdot C_{\lambda}(\Pi_{\mathcal{C}}) \log^2(M/\delta))$ samples. We can translate the latter into a regret rate by bounding the regret caused by classification failure with $\delta H$. We can set $\delta=o(1/H)$ to make the classification failure negligible, settling the regret $O(\lambda^{-2} C_{\lambda}(\Pi_{\mathcal{C}}) \log^2(MH))$.\footnote{For randomized classification, we can change Algorithm \ref{alg:ECE} to perform a randomized test, and the same conclusion holds with replacing $C_{\lambda}$ by $\widetilde{C}_{\lambda}$.}
Next, we show that the latter rate is nearly optimal by developing a lower bound to the regret for bandits under separation.

\subsection{Necessity of classification with separation}
\label{sec:lower_bound}
While the ECE approach may not always be the best algorithm to minimize regret, it is a near-optimal solution whenever the optimal actions and the separating actions do not overlap. To see this, suppose a family of $M$ multi-armed bandit instances $\Mset$ with arbitrarily many $K$ arms. Each $i^{th}$ instance has its unique optimal arm $k_i^*$, but only with margin $O(\epsilon)$, {\it i.e.,} instances are not well-separated with respect to optimal arms. In such scenarios, it is always better to first identify the task with $\lambda$-separating arms.


To formalize the fundamental link between regret and classification, for the remainder of the section we are going to consider a class of worst-case multi-armed bandit instances $\Mset$, which we refer as \texttt{hard}, defined as follows:
\begin{enumerate}[itemsep=0pt, topsep=0pt, leftmargin=*]
    \item For each bandit instance $i \in [M]$, there is a unique optimal arm $k_i^* \in [K]$ such that
    \begin{equation*}
        \mu_i(k_i^*) = \frac{3}{4} + 10\epsilon, \ \mu_j(k_i^*) = \frac{3}{4}, \ \forall j \neq i.
    \end{equation*} 
    \item All other arms $k \in [K] / \{k_i^*\}_{i\in[M]}$ are information-revealing, {\it i.e.,} either one of the following holds:
    \begin{equation*}
        \mu_i(k) = \frac{1+\lambda}{2} \ \text{or } \mu_i(k) = \frac{1 - \lambda}{2}, \ \forall i \in [M],
    \end{equation*}
    where $\epsilon,\lambda$ satisfy $1 > \lambda^2 > c_\lambda \epsilon  \cdot \widetilde{C}(\Mset)$ for some sufficiently large absolute constant $c_{\lambda} > 0$ and the randomized classification-coefficient $\widetilde{C} (\Mset)$ (defined below).
\end{enumerate}


\textbf{Classification complexity.}~~Let $C^*(\Mset)$ be the {\it optimal} depth of a deterministic decision tree classifier for the \texttt{hard} instance, constructed by probing the true means of separating arms $\mathcal{A}_{\lambda} := [K] / \{k_i^*\}_{i\in[M]}$. Let $\widetilde{C}^*(\Mset)$ be the optimal average depth of randomized decision trees. In this case, the classification-coefficient in \eqref{eq:det_classification_coeff} can be defined as
$
    C(\Mset) := C_{\lambda}(\mathcal{A}_{\lambda}),
$
and similarly for the {\it randomized} classification-coefficient $\widetilde{C}(\Mset) := \widetilde{C}_{\lambda}(\mathcal{A}_{\lambda})$.\footnote{From here on, we denote a \emph{classification-coefficient} $C_{\lambda} (\Pi_{\C})$ as $C_{\lambda} (\Mset)$ when the space of tests $\Pi_{\C}$ is a (sub)set of arms.} 
Note that the \emph{classification-coefficients} defined previously are concerned with the (worst-case) most even split on the hypotheses $S_t$, and thus they can be interpreted as measures for greedy classification strategies.
The following is a well-known relationship between these greedy measures and the optimal depth of (deterministic) decision trees \cite{arkin1993decision}
\begin{equation}
    \widetilde{C}(\Mset) \le C(\Mset) \le C^*(\Mset) \le C(\Mset) \log(M). \label{eq:depth_classification_coefficient}
\end{equation}
We note that these classification complexities can be as large as $M$ in the worst case, while in practical scenarios we can often design effective information-revealing actions to ensure $C^*(\Mset) = O(\log M)$. 

\textbf{Statistical barriers of separated bandits.}~~What is the lower bound to the test regret for $\Mset$? To quantify this, we recall a PAC-variant of DEC from \cite{chen2022unified}. Specifically, given some $\gamma>0$, we define the coefficient
\begin{align}
    &\texttt{dec}_{\gamma} (\Mset) := \max_{\omega \in \Delta([M])} \min_{\pi \in \Delta([K])} \max_{i \in [M]} \nonumber \\
    & \quad \mathbb{E}_{k \sim \pi} [\Delta_i(k)] - \gamma \mathbb{E}_{k\sim\pi, m\sim\omega} [D_{\texttt{H}}^2 (\nu_i(k), \nu_m(k)) ], \label{eq:dec_definition}
\end{align}
where $\Delta_i(k) := \mu_i(k_i^*) - \mu_i(k)$. We can verify the following relation between $\gamma$ and $\texttt{dec}_{\gamma}$:
\begin{lemma}
\label{lemma:dec_lower_bound}
    There exists an absolute constant $c_{\gamma} > 0$ such that for all $\gamma \le c_\gamma \lambda^{-2} \widetilde{C}(\Mset)$, we have $\texttt{dec}_{\gamma}(\Mset) > 3\epsilon$.
\end{lemma}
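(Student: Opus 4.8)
My plan is to lower bound $\texttt{dec}_\gamma(\Mset)$ by committing to one convenient reference distribution $\omega$ in the outer maximization and then collapsing the remaining $\min_\pi\max_i$ into a per-arm comparison. Let $S^*\in 2^{[M]}$ be the set attaining the outer maximum in $\widetilde{C}(\Mset)=\widetilde{C}_\lambda(\mathcal{A}_\lambda)$ (so $|S^*|\ge 2$), and take $\omega$ to be the uniform distribution over $S^*$. Since the objective in~\eqref{eq:dec_definition} is linear in $\pi$ and $\max_{i\in[M]}(\cdot)\ge \mathbb{E}_{i\sim\omega'}(\cdot)$ for every $\omega'\in\Delta(S^*)$, I would also take $\omega'=\mathrm{Unif}(S^*)$ and minimize the resulting linear functional of $\pi$ at a vertex $k\in[K]$, obtaining
\begin{equation*}
\texttt{dec}_\gamma(\Mset)\ \ge\ \min_{k\in[K]}\big(\,G(k)-\gamma\, I(k)\,\big),
\end{equation*}
where $G(k):=\mathbb{E}_{i\sim\mathrm{Unif}(S^*)}[\Delta_i(k)]$ and $I(k):=\mathbb{E}_{i,m\sim\mathrm{Unif}(S^*)}[D_{\texttt{H}}^2(\nu_i(k),\nu_m(k))]$. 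It then suffices to prove $G(k)-\gamma I(k)>3\epsilon$ for each arm separately, splitting $[K]$ into the optimal arms $\{k_j^*\}$ and the separating arms $\mathcal{A}_\lambda$.

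For an optimal arm $k_j^*$, the \texttt{hard} structure gives $\Delta_i(k_j^*)=10\epsilon\cdot\mathbb{1}[i\neq j]$, so $G(k_j^*)=10\epsilon\,(1-\tfrac{\mathbb{1}[j\in S^*]}{|S^*|})\ge 5\epsilon$ because $|S^*|\ge 2$. The means on $k_j^*$ differ across instances by at most $10\epsilon$, hence $D_{\texttt{H}}^2(\nu_i(k_j^*),\nu_m(k_j^*))=O(\epsilon^2)$ and $I(k_j^*)=O(\epsilon^2)$. Using $\gamma\le c_\gamma\lambda^{-2}\widetilde{C}(\Mset)$ together with the standing condition $\lambda^2>c_\lambda\epsilon\,\widetilde{C}(\Mset)$ yields $\gamma I(k_j^*)=O(c_\gamma\epsilon/c_\lambda)$, which is below $\epsilon$ for $c_\lambda$ large, so $G(k_j^*)-\gamma I(k_j^*)\ge 5\epsilon-\epsilon>3\epsilon$.

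The crux is the separating arms. For $k\in\mathcal{A}_\lambda$ the two reward means lie in $\{(1\pm\lambda)/2\}$, a constant below $\mu_i(k_i^*)=3/4+10\epsilon$, so $\Delta_i(k)\ge c_0$ for an absolute constant $c_0>0$ (this is where I use that, in the \texttt{hard} family, the separating arms are suboptimal by a constant margin), giving $G(k)\ge c_0$. For the information term, $D_{\texttt{H}}^2(\nu_i(k),\nu_m(k))$ is $0$ when $i,m$ agree on arm $k$ and at most $C_{\texttt{H}}\lambda^2$ otherwise, so $I(k)\le C_{\texttt{H}}\lambda^2\cdot \Pr_{i,m\sim\mathrm{Unif}(S^*)}[\,i,m\ \text{disagree on}\ k\,]$. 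I would now bound this disagreement probability by the classification-coefficient: the one-arm test $p=\delta_k\in\Delta(\mathcal{A}_\lambda)$ is admissible in the inner minimization defining $\widetilde{C}(\Mset)$, hence $\widetilde{C}(\Mset)\le \max_{i\in S^*}\tfrac{|S^*|}{|\bar{S}_\lambda^{k}(i)|}=\tfrac{|S^*|}{\min_{i\in S^*}|\bar{S}_\lambda^{k}(i)|}$, i.e. $\min_{i}|\bar{S}_\lambda^{k}(i)|\le |S^*|/\widetilde{C}(\Mset)$. Writing $A_k,B_k$ for the two groups, the disagreement probability is $2|A_k||B_k|/|S^*|^2\le 2\min_i|\bar{S}_\lambda^{k}(i)|/|S^*|\le 2/\widetilde{C}(\Mset)$, whence $\gamma I(k)\le c_\gamma\lambda^{-2}\widetilde{C}(\Mset)\cdot 2C_{\texttt{H}}\lambda^2/\widetilde{C}(\Mset)=2C_{\texttt{H}}c_\gamma$. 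Choosing $c_\gamma\le c_0/(4C_{\texttt{H}})$ gives $G(k)-\gamma I(k)\ge c_0-2C_{\texttt{H}}c_\gamma\ge c_0/2\gg 3\epsilon$.

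Combining the two cases gives $\min_k(G(k)-\gamma I(k))>3\epsilon$, proving the lemma for $c_\gamma$ a small enough absolute constant. I expect the main obstacle to be conceptual rather than computational: first, spotting that the adversary's reference $\omega$ and the inner worst-case $i$ can both be taken uniform over $S^*$, which reduces the minimax to a single per-arm quantity; and second, recognizing that the randomized classification-coefficient exactly controls the best achievable balance $2\min_i|\bar{S}_\lambda^{k}(i)|/|S^*|$ of \emph{every individual} separating arm, precisely because a deterministic one-arm test is always feasible in the minimization defining $\widetilde{C}(\Mset)$. The remaining work is bookkeeping: making the $O(\epsilon^2)$ Hellinger mass on the optimal arms negligible through $\lambda^2>c_\lambda\epsilon\,\widetilde{C}(\Mset)$, and using the constant suboptimality gap $c_0$ of the separating arms so that their bounded information penalty $2C_{\texttt{H}}c_\gamma$ cannot overturn the regret.
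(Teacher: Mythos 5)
Your proof is correct, and it takes a genuinely different route from the paper's. The paper keeps the inner $\max_i$ and therefore has to select the adversarial instance with care: it defines $i^*(\pi_\lambda)$ as the instance minimizing the expected disagreement under $\pi_\lambda$, constructs a second candidate $\bar i(\pi_\lambda)$ whose disagreement is within a factor $4$ of $i^*$'s (via a triangle-inequality argument on indicators), and then picks whichever of the two puts less $\pi$-mass on its own optimal arm, so that the gap term on the optimal arms is at least $5\epsilon\,\pi(k\notin\mathcal{A}_\lambda)$ while the information term stays $O(\lambda^2\widetilde{C}(\Mset)^{-1})$. You sidestep that entire case analysis by lower-bounding $\max_i$ with the uniform average over $S^*$: the averaged gap on any optimal arm $k_j^*$ is automatically at least $5\epsilon$ because at most one instance of $S^*$ has zero gap there, and the information term can then be controlled arm by arm. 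Your per-arm bound is also obtained differently: where the paper invokes the defining property of $i^*(\pi_\lambda)$ to get $\mathbb{E}_{m}\mathbb{E}_{k\sim\pi_\lambda}[D_{\texttt{H}}^2]\le\lambda^2\widetilde{C}(\Mset)^{-1}$, you use feasibility of the one-arm test $\delta_k$ in the inner optimization defining $\widetilde{C}$ to get $\min_i|\bar S_\lambda^k(i)|\le|S^*|/\widetilde{C}(\Mset)$, and the balance identity $2|A_k||B_k|/|S^*|^2\le 2\min(|A_k|,|B_k|)/|S^*|$ to conclude that \emph{every individual} separating arm has disagreement probability at most $2/\widetilde{C}(\Mset)$ --- a slightly stronger statement than the paper needs, at the cost of a factor $2$. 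Both arguments share the same skeleton (decomposition into optimal versus separating arms, use of $\lambda^2>c_\lambda\epsilon\,\widetilde{C}(\Mset)$ to make the $O(\epsilon^2)$ Hellinger mass on optimal arms negligible, and the implicit requirement that $\lambda$ be bounded away from $1/2$ so separating arms have a constant suboptimality gap); what your averaging buys is the elimination of the $i^*/\bar i$ selection and of the factor-$4$ lemma.
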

As a corollary of \citep[][Theorem 10]{chen2022unified}, this implies the lower bound on the high probability regret:
\begin{theorem}
    \label{theorem:classification_lower_bound}
    There exists an absolute constant $c > 0$, such that if $1/H < c\epsilon$, then any algorithm must suffer regret $\Omega(\min (\epsilon H, c_{\gamma} \lambda^{-2} \widetilde{C}(\Mset)) )$ with probability at least $1/H$.
\end{theorem}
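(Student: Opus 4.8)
The plan is to derive the bound as a corollary of the PAC-variant decision-estimation-coefficient lower bound of \citet{chen2022unified} (their Theorem~10), feeding it the estimate on $\texttt{dec}_\gamma(\Mset)$ supplied by Lemma~\ref{lemma:dec_lower_bound}. That lemma is the only problem-specific ingredient; everything else is a translation of a black-box guarantee. The DEC lower bound I invoke has the following shape: whenever $\texttt{dec}_\gamma(\Mset) > \rho$ for an offset $\gamma$ and a level $\rho$, then for \emph{any} algorithm run over $H$ rounds there is an instance in $\Mset$ on which the regret is at least $\Omega(\min(\rho H,\, \gamma))$ with the stated probability. The $\min$ has a transparent reading: $\rho H$ is the cost of \emph{never} resolving the identity of the task and paying the per-round suboptimality $\rho$ for all $H$ rounds, while $\gamma$ is the cost of \emph{exploring} enough to resolve it --- identifying the task requires $\Theta(\lambda^{-2}\widetilde{C}(\Mset))$ pulls of separating arms, each incurring constant regret, which is exactly $\Theta(\gamma)$.

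First I would instantiate the offset at $\gamma = c_\gamma\lambda^{-2}\widetilde{C}(\Mset)$, the largest value allowed by Lemma~\ref{lemma:dec_lower_bound}, which then guarantees $\texttt{dec}_\gamma(\Mset) > 3\epsilon$, so we may take $\rho = 3\epsilon$. Substituting into the DEC lower bound immediately gives regret $\Omega(\min(3\epsilon H,\, c_\gamma\lambda^{-2}\widetilde{C}(\Mset))) = \Omega(\min(\epsilon H,\, c_\gamma\lambda^{-2}\widetilde{C}(\Mset)))$, which is the claim. Checking the two regimes confirms consistency with Theorem~\ref{theorem:classification_upper_bound}: when $\lambda^{-2}\widetilde{C}(\Mset) \le \epsilon H$ the minimum is the exploration term $\lambda^{-2}\widetilde{C}(\Mset)$ (the learner should classify, in line with the ECE upper bound), and when $\lambda^{-2}\widetilde{C}(\Mset) > \epsilon H$ the minimum is $\epsilon H$ (the horizon is too short to afford classification, so paying the small optimal-arm gap is unavoidable).

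The remaining step is the probabilistic packaging. The guarantee of \citet{chen2022unified} is a high-probability statement, and carrying it through the \texttt{hard} construction yields the probability floor $1/H$. The standing hypothesis $1/H < c\epsilon$ is precisely what makes this floor meaningful: it ensures that the suboptimality level $\rho = 3\epsilon$ we are forcing exceeds the resolution $\sim 1/H$ at which regret can be certified, so that the event of incurring $\Omega(\min(\epsilon H, \gamma))$ regret genuinely occurs with probability at least $1/H$ rather than being washed out.

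I expect the main obstacle to be the faithful instantiation of \citet{chen2022unified}'s theorem rather than any fresh calculation, since Lemma~\ref{lemma:dec_lower_bound} already absorbs the hard part of computing the DEC of the \texttt{hard} family. One must verify that the divergence, the localization over the reference mixture $\omega \in \Delta([M])$, and the decision/observation model of their Theorem~10 line up exactly with the offset DEC in \eqref{eq:dec_definition}, and that its conclusion can be read in the $\min(\rho H, \gamma)$ form with $\gamma$ playing the role of the admissible offset. The subtle point is that the cited result is a \emph{PAC} (output-a-guess) statement, so turning it into a bound on \emph{cumulative} regret --- which is where the $\min$ and the explore-versus-exploit dichotomy materialize --- requires relating the policy the algorithm commits to at each round to the information it has accumulated so far.
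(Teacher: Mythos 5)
Your plan is the same as the paper's at the top level: combine Lemma~\ref{lemma:dec_lower_bound} with a DEC-based regret lower bound, instantiate $\gamma = c_\gamma\lambda^{-2}\widetilde{C}(\Mset)$ so that $\texttt{dec}_\gamma(\Mset) > 3\epsilon$, and read off $\Omega(\min(\epsilon H, \gamma))$. The main text even advertises the result as ``a corollary of Theorem~10 of \citet{chen2022unified},'' so your framing is faithful to the paper's own. However, the step you defer --- ``faithful instantiation'' of the cited theorem and the PAC-to-cumulative-regret conversion --- is not a verification detail; it is the actual content of the paper's proof. The appendix does not cite a black box: it states its own regret-form bound (Theorem~\ref{theorem:lower_bound_chen}, $\reg_H(\Mset) \ge C_2\max_{\gamma \ge C_1\sqrt{H}}\min((\texttt{dec}_\gamma(\Mset)-\delta)H,\gamma)$ with probability at least $\delta$), described only as ``reminiscent of'' the results in \citet{chen2022unified} and \citet{foster2021statistical}, and then proves it over roughly a page and a half by adapting Section~C.1 of \citet{foster2021statistical}: defining per-instance good events $\Eps_m = \{\reg_H^m \le c_1\gamma\}$, splitting $\Exs_{\bar m}[\reg_H^m]$ on $\Eps_m$ versus $\Eps_m^c$ via their Lemmas~A.11 and~C.1, controlling $D_{\texttt{H}}^2(\PP_{\bar m}^H,\PP_m^H)$ through a chain-rule/subadditivity lemma for Hellinger distance (which requires checking a likelihood-ratio boundedness condition specific to the \texttt{hard} construction, namely that observation probabilities are bounded below by $(1-\lambda)/2$), and using the fact that all instances in \texttt{hard} share the same optimal value so that the cross-instance regret comparison of Lemma~C.1 goes through. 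You correctly diagnosed that this is where the difficulty lies, but diagnosing it is not the same as discharging it, so as written the proposal has a genuine gap at its load-bearing step.

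A second, smaller omission: the regret-form bound only permits offsets $\gamma \ge C_1\sqrt{H}$, so your instantiation $\gamma = c_\gamma\lambda^{-2}\widetilde{C}(\Mset)$ is only admissible when $\lambda^{-2}\widetilde{C}(\Mset) \gtrsim \sqrt{H}$. The paper handles the complementary regime by monotonicity of regret in the horizon: for $H \ge H_0 = \lambda^{-4}\widetilde{C}(\Mset)^2$ one applies the bound at horizon $H_0$ and uses $\reg_H \ge \reg_{H_0}$ to retain the $\tilde\Omega(\lambda^{-2}\widetilde{C}(\Mset))$ floor. Your write-up does not mention this constraint, and without it the instantiation is vacuous for large $H$ --- which is exactly the regime where the $\lambda^{-2}\widetilde{C}(\Mset)$ branch of the minimum is the binding one.
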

Thus, any algorithm guarantees with probability at least $1-1/H$ must suffer at least $\Omega(\widetilde{C}(\Mset) \lambda^{-2})$ test {\it regret}, capturing the fundamental limits of separated bandits. Note that the lower bound depends on the {\it randomized} classification-coefficient, though deterministic strategies can still be preferred due to their simplicity in practice.

\section{A more practical ECE algorithm}
\label{sec:ECE_implementation}

In the previous section, we analyzed the ECE algorithm in an \emph{ideal} setting in which the reward distributions of all the bandits in $\Mset$ and the context distribution $\Prob$ are fully known.\footnote{Algorithm~\ref{algo:LLT} access $\Prob$ to compute the log likelihood at step 2.} Here, we present a more practical variation of the algorithm, \emph{Decision Tree ECE} (DT-ECE), which (i) is robust to misspecifications of $\Mset$ caused by estimation errors at meta training, (ii) only accesses samples coming from the context distribution $\Prob$, (iii) lays down a fully interpretable exploration plan  through a decision tree classifier.

In this section, we work under a special case of the separation condition (Ass.~\ref{assumption:distance_separation}) which assumes separation on the mean of the rewards instead of their distribution.
\begin{assumption}
\label{ass:bandits_separation}
    For some $\lambda > 0$ and every $\nu_i, \nu_j \in \Mset$, there exists $k \in [K]$ such that $|\EV_{x \sim \Prob} [x^\top (\theta_{ik} - \theta_{jk}) ]| > \lambda$. 
\end{assumption}
First, we describe the meta training stage with the corresponding estimation guarantees, sample and computational complexity (Section~\ref{sec:ECE_implementation_meta_training}). Then, we present the DT-ECE test algorithm and we analyze its regret (Section~\ref{sec:ECE_implementation_test}).

\subsection{Meta training}
\label{sec:ECE_implementation_meta_training}

In this section, we describe a provably efficient algorithm to meta train an exploration plan $\texttt{Plan} (\Mset)$ by only accessing offline simulators of the tasks in $\Mset$ and samples from $\Prob$.\footnote{An analogous algorithm accessing pre-logged historical data can be developed. The reported guarantees shall transfer verbatim under natural conditions on the size and quality of the dataset.}

\begin{algorithm}[t]
    \caption{Meta Training}
    \label{alg:meta_training}
    \begin{small}
    \begin{algorithmic}[1]
        \STATE \textbf{input} simulators $\Mset$, $N_{\est}$
        \STATE Initialize $\hat\Mset = \emptyset$
        \FOR{$i \in [M]$} 
            \FOR{$k \in [K]$}
                \STATE Sample $N_{\est}$ contexts $X = (x_n \sim \Prob )$ 
                \STATE Sample $N_{\est}$ rewards $\bm{r} = ( r_n \sim \nu_i (x_n, k) )$ 
                \STATE Compute $\hat \theta_{ik} = (XX^\top)^{-1} X \bm{r}$
                \STATE Compute $\hat\mu_{ik} = \frac{1}{N_{\est}}\sum_n r_n$
            \ENDFOR
            \STATE $\hat\Mset.\texttt{append}(\hat\nu_i = ([\hat\theta_{i1}, \hat\mu_{i1}], \ldots [\hat\theta_{iK}, \hat \mu_{iK}]))$
        \ENDFOR
        \STATE Build a decision tree classifier $\tree (\hat\Mset)$ with Algorithm~\ref{alg:decision_tree}
        \STATE \textbf{output} exploration plan $\texttt{Plan} (\hat\Mset)$ prescribed by $\tree (\hat\Mset)$
    \end{algorithmic}
    \end{small}
\end{algorithm}

The meta training algorithm, whose pseudocode is in Algorithm~\ref{alg:meta_training}, has two main procedures. First, it estimates the parameters of each task $\nu_i$ by doing regression on the class of linear functions of the context (lines 2-11). Second, it takes the (possibly misspecified) resulting class $\hat\Mset$ to build a deterministic decision tree classification model over the tasks (line 12).
The following lemma provides an estimation guarantee over $\hat\Mset$ from the analysis of \emph{random design} linear regression~\citep{hsu2011analysis}.
\begin{lemma}
    Let $\Mset$ be a set of $M$ linear contextual bandits and let $\hat\Mset$ their estimation obtained by Algorithm~\ref{alg:meta_training} with
    \begin{equation*}
        N_{\est} = \frac{160 \sigma^2 d \log (4HMK)}{\min (\Delta^2, \lambda^2)}.
    \end{equation*}
    For every bandit $i \in [M]$ and arm $k \in [K]$, it holds
    \begin{equation*}
        \mathbb{P} \left( \EVP \left[ |x^\top \hat\theta_{ik} - x^\top \theta_{ik}| \right] > \min \left( \frac{\Delta}{2}, \frac{\lambda}{4} \right)  \right) \leq \frac{1}{2HMK}.
    \end{equation*}
\label{thr:estimation_lemma}    
\end{lemma}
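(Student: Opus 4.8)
The plan is to recognize $\EVP[|x^\top \hat\theta_{ik} - x^\top\theta_{ik}|]$ as (the root of) the \emph{excess prediction risk} of ordinary least squares and to control it via the random-design regression analysis of \citet{hsu2011analysis}. Fix a task $i\in[M]$ and arm $k\in[K]$: the estimator $\hat\theta_{ik}=(XX^\top)^{-1}X\bm r$ is the OLS solution from $N_{\est}$ i.i.d. pairs $(x_n,r_n)$ with $x_n\sim\Prob$ and $r_n = x_n^\top\theta_{ik}+\eta_n$, where $\eta_n$ is zero-mean and $\sigma$-subgaussian. Writing $\Sigma:=\EVP[xx^\top]$ and $\|v\|_\Sigma^2:=v^\top\Sigma v$, Jensen's inequality gives $\EVP[|x^\top(\hat\theta_{ik}-\theta_{ik})|]\le \|\hat\theta_{ik}-\theta_{ik}\|_\Sigma$, so it suffices to control the $\Sigma$-norm error, which is exactly the quantity the random-design analysis targets.

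First I would invoke the OLS excess-risk bound. Since the contexts have bounded $\ell_1$ norm (hence bounded leverage) and the noise is $\sigma$-subgaussian, for $N_{\est}$ above a $\poly(d)\,\polylog$ threshold the empirical second-moment matrix concentrates around $\Sigma$ and is invertible, and there is an absolute constant $c_0$ such that, for any target failure probability $\delta'$, with probability at least $1-\delta'$ we have $\|\hat\theta_{ik}-\theta_{ik}\|_\Sigma^2 \le c_0\,\sigma^2(d+\log(1/\delta'))/N_{\est}$. I would then set $\delta'=1/(2HMK)$ and absorb the additive logarithm into the dimension factor via $d+\log(2HMK)\le d\log(4HMK)$ (using $d\ge 1$), so that the bound reads $\|\hat\theta_{ik}-\theta_{ik}\|_\Sigma^2 \le c_0\,\sigma^2 d\log(4HMK)/N_{\est}$.

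Plugging in the prescribed $N_{\est}=160\,\sigma^2 d\log(4HMK)/\min(\Delta^2,\lambda^2)$ collapses the right-hand side to $(c_0/160)\min(\Delta^2,\lambda^2)$. Taking square roots and using the elementary inequality $\min(\Delta/2,\lambda/4)\ge \tfrac14\min(\Delta,\lambda)=\tfrac14\sqrt{\min(\Delta^2,\lambda^2)}$ (verified by splitting on whether $\Delta\le\lambda$), a choice $160\ge 16c_0$ yields $\|\hat\theta_{ik}-\theta_{ik}\|_\Sigma\le \min(\Delta/2,\lambda/4)$ on the same high-probability event. Combined with the Jensen step, this gives the claimed deviation bound for the fixed pair $(i,k)$ with failure probability at most $1/(2HMK)$; no union bound is needed at this level since the statement is per $(i,k)$, although a later union over all $MK$ estimators will cost an extra factor (accounted for in the $2HMK$ inside the logarithm).

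The hard part will be verifying that the clean $\sigma^2 d/N$ form of the excess-risk bound is legitimately available under the paper's assumptions, i.e., checking the regularity conditions of \citet{hsu2011analysis} (invertibility and concentration of $XX^\top$ around $N_{\est}\Sigma$, the leverage/boundedness condition on the design, and subgaussianity of the responses) and confirming that the prescribed $N_{\est}$ clears their sample-size threshold so that the relevant constant $c_0$ is genuinely absolute. Everything after that is bookkeeping of constants, for which the slack in the factor $160$ leaves ample room.
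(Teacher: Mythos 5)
Your proposal follows essentially the same route as the paper: the paper likewise invokes Theorem 1 of \citet{hsu2011analysis} for the random-design OLS excess quadratic loss, converts it to a bound on $\EVP[|x^\top\hat\theta_{ik}-x^\top\theta_{ik}|]$ (the Jensen step you make explicit), plugs in $\delta = 1/(2HMK)$, and finishes with constant bookkeeping against the prescribed $N_{\est}$. Your version is correct and, if anything, spells out the quadratic-to-absolute-loss conversion and the constant accounting more carefully than the paper's two-line argument.
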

The latter guarantees that the identity of the optimal arm and the separation condition is preserved w.h.p. by the estimation process. As we shall see, these properties will prove useful at test stage. Before going to that, it is worth detailing how the decision tree classifier is built (Algorithm~\ref{alg:decision_tree}). 

We consider a set of tests $\Pi_{\mathcal{C}}$ equal to the set of arms $[K]$, for which we are going to test the mean reward $\hat\mu_k$ against a threshold $b \in [0, 1]$. Since computing the optimal test is NP-hard in general~\citep{laurent1976constructing}, we turn to a greedy approximation which gives the test with the most even split~\citep{arkin1993decision, nowak2011geometry}. Algorithm~\ref{alg:greedy} in Apx.~\ref{apx:greedy} gives a tractable procedure with which the greedy test can be computed. In order to make the tests along the tree statistically robust when computed with samples from the test task, we consider \emph{soft splits}~\citep{olaru2003complete}: We let the test $\hat\mu_k \leq b$ be simultaneously true and false inside a $\lambda$-band around $b$ (see Figure~\ref{fig:tree_visualization}).

The meta training algorithm that we just described is \emph{fully tractable}, both in terms of computational resources and sample complexity, as proved by the result below.
\begin{theorem}
\label{thr:complexity}
    Algorithm~\ref{alg:meta_training} runs in time $\cO (d^3 M^3 K / \lambda^4)$ and collects a total number of samples 
    $$\frac{160 \sigma^2MKd\log(4TMK)}{\min(\Delta^2, \lambda^2)}.$$ 
\end{theorem}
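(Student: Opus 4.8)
The plan is to handle the two claims independently, since the sample count is governed solely by the data-collection loop whereas the running time aggregates the cost of the regression phase with the cost of building the decision tree. The sample complexity is immediate: Algorithm~\ref{alg:meta_training} draws exactly $N_{\est}$ contexts and $N_{\est}$ rewards once for every pair $(i,k)$ with $i\in[M]$, $k\in[K]$ in the double loop of lines 3--11, and collects nothing afterwards. Hence the total equals $MK\cdot N_{\est}$, and substituting the value of $N_{\est}$ prescribed by Lemma~\ref{thr:estimation_lemma} gives the announced count $160\sigma^2 MKd\log(4TMK)/\min(\Delta^2,\lambda^2)$ (the $H$ in the lemma and the $T$ here being the same horizon parameter).

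For the running time I would bound the two phases separately and then argue that the stated rate dominates their sum. In the regression phase, for each of the $MK$ pairs we form the design matrix $X\in\Reals^{d\times N_{\est}}$, compute $XX^\top$ in time $\cO(d^2 N_{\est})$, invert it in time $\cO(d^3)$, and apply the result to $X\bm r$ in time $\cO(dN_{\est}+d^2)$; the empirical mean $\hat\mu_{ik}$ adds a further $\cO(N_{\est})$. Thus each pair costs $\cO(d^2 N_{\est}+d^3)$, and the whole phase costs $\cO\big(MK(d^2 N_{\est}+d^3)\big)$. Plugging in $N_{\est}=\cO(d\log(\cdot)/\min(\Delta^2,\lambda^2))$ and absorbing the logarithmic factor into $\cO(\cdot)$ bounds this phase by $\cO(MKd^3/\lambda^2)$ in the relevant regime $\lambda^2\le\Delta^2$.

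The heart of the argument is the tree-building step (line 12, via Algorithm~\ref{alg:decision_tree} and the greedy routine of Algorithm~\ref{alg:greedy}). Here I would first bound the number of nodes of $\tree(\hat\Mset)$ and then the per-node work. At a node carrying the hypothesis set $S\subseteq[M]$, the greedy rule scans the $K$ arms and, for each arm, the $\cO(|S|)$ candidate thresholds induced by the sorted estimated means $\{\hat\mu_{ik}\}_{i\in S}$, evaluating the induced split for each threshold in time $\cO(|S|)$; this is $\cO(K|S|^2)=\cO(KM^2)$ per node. The delicate point — and the step I expect to be the main obstacle — is bounding the number of nodes once \emph{soft splits} are permitted, because a task whose estimated mean falls inside the $\lambda$-band of a threshold is routed to \emph{both} children, so the recursion need not shrink $|S|$ monotonically and a naive leaf count may exceed $M$. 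I would control this through the separation Assumption~\ref{ass:bandits_separation}: because any two surviving tasks differ by more than $\lambda$ on some arm while the band has half-width below $\lambda/2$, the greedy most-even split can always place a threshold that strictly separates at least one pair of tasks into distinct children, so the duplication caused by the band is benign and the tree retains $\poly(M)$ nodes (in fact $\cO(M)$ if the soft band is only used for test-time routing). Combined with the per-node cost, the tree phase is then $\cO(\poly(M)\cdot K)$, e.g.\ $\cO(KM^3)$.

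Finally I would combine the two phases: the regression phase contributes $\cO(MKd^3/\lambda^2)$ and the tree phase contributes $\cO(KM^3)$, and both are dominated by the announced bound $\cO(d^3M^3K/\lambda^4)$, since $M^2/\lambda^2\ge 1$ absorbs the regression term and $d^3/\lambda^4\ge 1$ absorbs the tree term under $\lambda\le 1\le d,M$. This yields the claimed running time and closes the proof. The only genuinely nontrivial ingredient is the node-count bound under soft splits; everything else is elementary loop counting and standard linear-algebra cost accounting.
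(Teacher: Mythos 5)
Your proposal is correct and follows essentially the same skeleton as the paper's proof: the sample count is pure loop counting ($MK\cdot N_{\est}$ with $N_{\est}$ from Lemma~\ref{thr:estimation_lemma}), and the running time is bounded as (number of tree nodes) $\times$ (per-node greedy cost), with everything absorbed into the loose target $\cO(d^3M^3K/\lambda^4)$. There are two genuine differences worth noting. First, you explicitly account for the $\cO(d^2N_{\est}+d^3)$ linear-algebra cost of the regression phase per pair $(i,k)$, whereas the paper's proof dismisses this entirely by ``assuming access to parallel simulators'' and attributes all computation to the tree-building step; your accounting is arguably the more honest one, since it is the only place a $d^3$ factor can actually arise, and the paper's own derivation produces only $\cO(M^3K/\lambda^4)$ despite the theorem statement carrying a $d^3$. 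Second, you enumerate candidate thresholds via the $\cO(|S|)$ sorted estimated means, while Algorithm~\ref{alg:greedy} as written uses a $\lambda/4$-discretization of $[0,1]$, i.e.\ $\cO(1/\lambda)$ thresholds per arm; this changes the per-node cost bookkeeping (the paper charges $4K/\lambda^4$ per greedy call, you charge $\cO(KM^2)$) but not the conclusion, since both are dominated by the stated bound. On the one point you flag as delicate --- the node count under soft splits --- the paper simply asserts $\cO(M^2)$ nodes without the separation-based argument you sketch, so your treatment is no less rigorous than the original; a fully airtight bound there would strengthen both proofs.
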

Finally, we can provide a guarantee on the cost of the greedy approximation with respect to the depth of the optimal deterministic decision tree on $\hat\Mset$, {\it i.e.}, $C^*_\lambda (\hat\Mset)$.
\begin{lemma}
\label{thr:tree_approximation}
    Algorithm~\ref{alg:decision_tree} builds a decision tree with depth $\depth = \cO (\log M + 1) C_{\lambda}^* (\hat\Mset).$
\end{lemma}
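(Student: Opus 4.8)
The plan is to track the surviving candidate set along every root-to-leaf path of $\tree(\hat\Mset)$, show it shrinks geometrically at a rate governed by the greedy classification-coefficient $C_\lambda(\hat\Mset)$ of \eqref{eq:det_classification_coeff}, and then convert that shrinkage into a depth bound before substituting $C_\lambda(\hat\Mset) \le C^*_\lambda(\hat\Mset)$ from \eqref{eq:depth_classification_coefficient}. First I would analyze a single internal node: fix a node whose surviving set is $S$ with $|S|>1$ and let $i \in S$ be the (unknown) true task. By construction Algorithm~\ref{alg:decision_tree} selects the arm/threshold test $\pi$ realizing the most even split of $S$, i.e. it maximizes $\min_i |\bar S^\pi_\lambda(i)| = \min(|S_L|,|S_R|)$, which is exactly the greedy minimizer defining \eqref{eq:det_classification_coeff}. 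Since $C_\lambda(\hat\Mset)$ is the maximum over all $S$ of this min-max ratio, for the chosen $\pi$ we get $|\bar S^\pi_\lambda(i)| \ge |S|/C_\lambda(\hat\Mset)$ for every $i \in S$. Hence the branch keeping the true task discards all of $\bar S^\pi_\lambda(i)$ and is left with at most $|S| - |\bar S^\pi_\lambda(i)| \le |S|\,(1 - C_\lambda(\hat\Mset)^{-1})$ candidates.

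Next I would unroll the recursion. Starting from $|S_0| = M$ at the root and applying the per-node bound along the branch that contains the true task, after $\depth$ levels the surviving set has size at most $M\,(1 - C_\lambda(\hat\Mset)^{-1})^{\depth}$. A leaf (a singleton) is reached once this quantity drops below $1$; using $-\log(1-x) \ge x$ for $x \in (0,1)$, this occurs once $\depth \ge C_\lambda(\hat\Mset)\log M$, so the path length is at most $\lceil C_\lambda(\hat\Mset)\log M\rceil \le C_\lambda(\hat\Mset)(\log M + 1)$, where the last step uses $C_\lambda(\hat\Mset) \ge 1$. As the bound holds on every root-to-leaf path, $\depth = \cO\big((\log M + 1)\,C_\lambda(\hat\Mset)\big)$, and the claimed statement follows from the greedy-versus-optimal inequality $C_\lambda(\hat\Mset) \le C^*_\lambda(\hat\Mset)$ recalled in \eqref{eq:depth_classification_coefficient}. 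The additive $+1$ inside $\cO(\log M + 1)$ also absorbs the trivial case $M = 1$, where the tree has depth $0$.

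The main obstacle is justifying the single-node estimate under the \emph{soft splits} used by Algorithm~\ref{alg:decision_tree}: a task whose estimated mean lies in the $\lambda$-band around the threshold $b$ is routed into both children, so $S_L$ and $S_R$ may overlap and $|S_L| + |S_R|$ can exceed $|S|$, which a priori breaks the clean ``delete a $1/C_\lambda(\hat\Mset)$ fraction'' accounting. I would resolve this through separation: under Assumption~\ref{ass:bandits_separation}, preserved on $\hat\Mset$ up to a constant factor by Lemma~\ref{thr:estimation_lemma}, any pair of candidates that the test counts as $\lambda$-separated on the probed arm differs there by more than the band width, hence lies strictly on opposite sides of the band and is never duplicated simultaneously. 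Consequently the duplicated (in-band) tasks are precisely the ones \emph{not} contributing to $\bar S^\pi_\lambda(i)$, so the entire set $\bar S^\pi_\lambda(i)$ is genuinely removed from the true task's branch and the bound $|S| - |\bar S^\pi_\lambda(i)| \le |S|(1 - C_\lambda(\hat\Mset)^{-1})$ continues to hold. With this, the shrinkage estimate, and therefore the depth bound, carries over verbatim to the soft-split tree produced by Algorithm~\ref{alg:decision_tree}.
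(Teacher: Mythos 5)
Your overall strategy---bound the depth directly by showing the surviving set shrinks geometrically at rate $1-C_{\lambda}(\hat\Mset)^{-1}$ and then pass to $C^*_{\lambda}(\hat\Mset)$ via \eqref{eq:depth_classification_coefficient}---is genuinely different from the paper's proof, which simply invokes the known approximation guarantee for the greedy decision-tree heuristic of \citet{arkin1993decision} (greedy depth is $\cO(\log M)$ times the optimal depth $C^*_{\lambda}(\hat\Mset)$) and then argues that the $\lambda/4$-discretization of thresholds in Algorithm~\ref{alg:greedy} costs only a constant factor. Your route is more self-contained and would even give the slightly sharper bound $\cO(C_{\lambda}(\hat\Mset)\log M)$, but as written the key single-node step has a genuine gap.

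The gap is in identifying the tree's split with the greedy minimizer in \eqref{eq:det_classification_coeff}. The test actually used is a thresholded single-arm test, and the child retaining the true task discards only the tasks on the \emph{opposite} side of the $\lambda$-band around $b$, not all of $\bar{S}^{\pi}_{\lambda}(i)=\{m\in S: |\mu_{ik}-\mu_{mk}|\ge\lambda\}$: a task $m$ with $\mu_{mk}\le\mu_{ik}-\lambda$ lies in $\bar{S}^{\pi}_{\lambda}(i)$ yet survives alongside $i$ whenever both fall below $b-\lambda/2$. So your assertion that ``the entire set $\bar S^\pi_\lambda(i)$ is genuinely removed from the true task's branch'' is false, and the per-node bound $|S_{\mathrm{child}}|\le|S|(1-C_{\lambda}(\hat\Mset)^{-1})$ does not follow as stated. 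It can be repaired: for the arm $k$ witnessing the definition of $C_{\lambda}$ at $S$, order the tasks by $\mu_{mk}$, take the median task $i_{\mathrm{med}}$, observe that at least $|S|/(2C_{\lambda}(\hat\Mset))$ members of $\bar{S}^{k}_{\lambda}(i_{\mathrm{med}})$ lie on one side of $\mu_{i_{\mathrm{med}}k}$ at distance $\ge\lambda$, and place $b$ halfway; then $\min(|S^+(b)|,|S^-(b)|)\ge |S|/(2C_{\lambda}(\hat\Mset))$, the greedy value is at least as large, and since \emph{each} child excludes one full band-side the recursion goes through with an extra factor of $2$. You would also need to handle the $\lambda/4$-discretization of $b$ (the paper does this by showing any continuous threshold is simulated by at most two grid thresholds). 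Finally, the appeal to Lemma~\ref{thr:estimation_lemma} is unnecessary here: both the tree and the coefficient $C^*_{\lambda}(\hat\Mset)$ in the statement are defined on the estimated class $\hat\Mset$, so no transfer from $\Mset$ is involved.
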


\begin{algorithm}[t]
    \caption{Decision Tree}
    \label{alg:decision_tree}
    \begin{small}
    \begin{algorithmic}[1]
        \STATE \textbf{input} set of tasks $S$
        \IF{$|S| > 1$}
            \STATE Compute $(\mu_k \leq b) \gets \texttt{greedy}(S)$ with Algorithm~\ref{alg:greedy}
            \STATE Define $\texttt{tree} (S) := (\mu_k \leq b)$
            \STATE Compute $S^+ = \{ \nu_i \in S \ | \ \mu_{ik} \leq b + \lambda / 2 \}$ 
            \STATE Compute $S^- = \{ \nu_i \in S \ | \ \mu_{ik} > b - \lambda / 2 \}$ 
            \STATE Define $\texttt{tree} (S, \text{true}) := S^+$ and $\texttt{tree} (S, \text{false}) := S^-$
            \STATE Call Algorithm~\ref{alg:decision_tree} on $S^+$ and $S^-$ recursively
        \ENDIF
    \end{algorithmic}
    \end{small}
\end{algorithm}

\begin{figure}[t]
    \centering
    \includegraphics[width=0.85\linewidth]{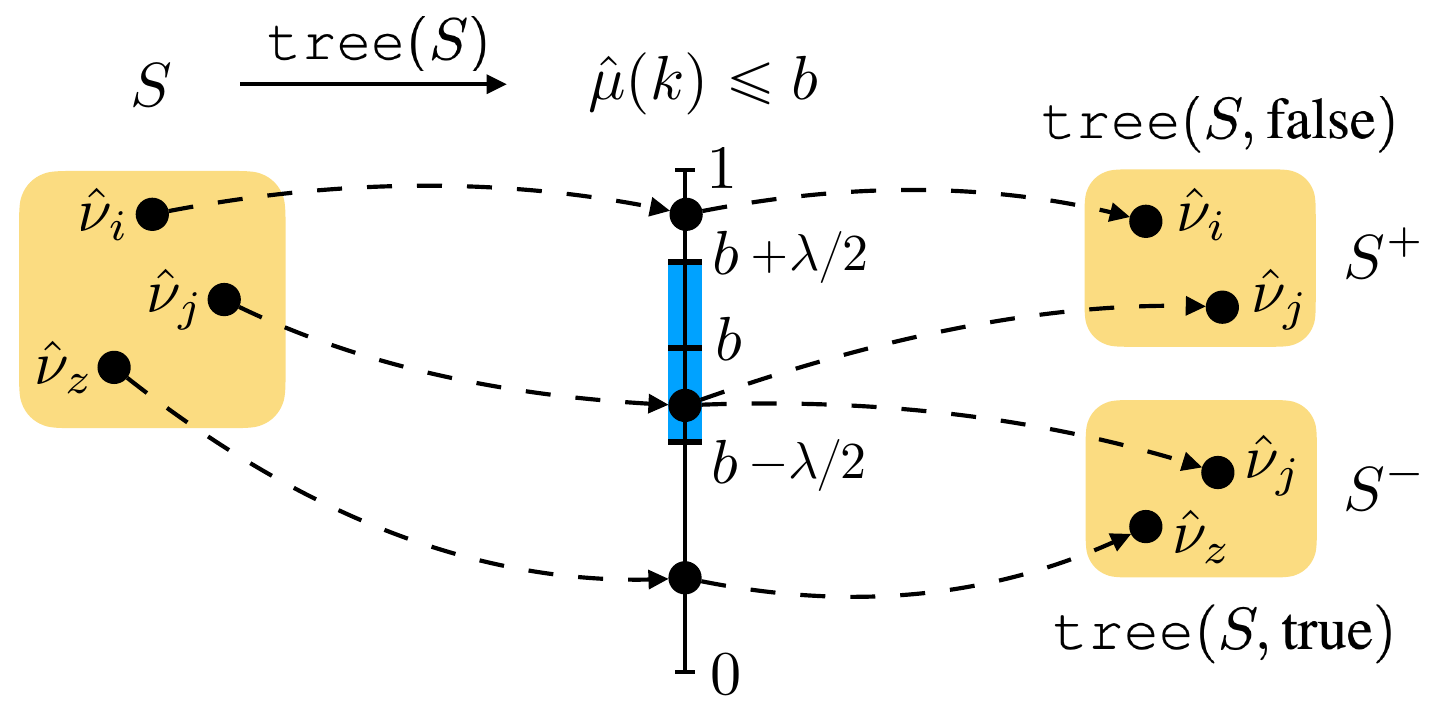}
    \vspace{-0.1cm}
    \caption{Visualization of a generic split of $\texttt{tree} (\hat\Mset)$.}
    \label{fig:tree_visualization}
\end{figure}

\subsection{Test}
\label{sec:ECE_implementation_test}

Here we analyze the test algorithm implementing the exploration plan $\texttt{Plan} (\hat\Mset)$ prescribed by the decision tree classifier $\texttt{tree} (\hat\Mset)$, which we call DT-ECE. As said above, this test algorithm is a slight variation of ECE (Algorithm~\ref{alg:ECE}) and mostly follow similar steps. Here we comment on the differences and we leave a complete pseudocode to Apx.~\ref{apx:dt_ece}.

Without turning to the appendix, we can look at the pseudocode in Algorithm~\ref{alg:ECE} and picture that, at line 4, DT-ECE would extract a test $\mu_k \leq b$ from $\texttt{tree} (S_t)$ on the current hypotheses $S_t$, collecting data like in line 5 with the policy $\pi_t = k$ prescribed by the test. Then, instead of updating the remaining hypotheses $S_{t+1}$ with log likelihood tests (line 6), it takes $S_{t+1}$ by following the left or right split in the tree according to whether the test resulted true or false, respectively. Those changes lead to the following regret.
\begin{theorem}
\label{thr:DT_ECE_regret}
    Suppose Assumption~\ref{ass:bandits_separation} holds on a set of tasks $\Mset$ and let $\texttt{tree} (\hat\Mset)$ be obtained from Algorithm~\ref{alg:meta_training}. The expected test regret of DT-ECE (Algorithm~\ref{alg:DT_ECE}) for $H$ steps is
    \begin{equation*}
        \reg_H (\Mset) = \cO \left( \frac{C^*_{\lambda} (\Mset) \log^2 (C^*_{\lambda} (\Mset) M H)}{\lambda^2}  \right)
    \end{equation*}
\end{theorem}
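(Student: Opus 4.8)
The plan is to decompose the expected test regret of DT-ECE into a \emph{classification} phase (descending the tree $\tree(\hat\Mset)$) and an \emph{exploitation} phase (playing the optimal arm of the reached leaf task), mirroring the analysis behind Theorem~\ref{theorem:classification_upper_bound}. In the classification phase the agent follows a single root-to-leaf path, pulling at each visited node the arm $k$ prescribed by its test $\hat\mu_k \le b$ for $N_{\cls}$ rounds; since rewards lie in $[0,1]$ the per-step regret is at most $1$, so the classification regret is bounded by (path length)$\times N_{\cls}$, and the path length is at most the tree depth $\depth$, controlled by Lemma~\ref{thr:tree_approximation}. In the exploitation phase, conditioned on the true task $i^\star$ being routed to the leaf $\{m^\star\}$ with $m^\star=i^\star$, the played arm is optimal (the estimation of Lemma~\ref{thr:estimation_lemma} preserves the identity of the optimal arm up to gap $\Delta/2$), giving zero regret; otherwise exploitation regret is at most $H$. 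I would thus set the confidence to $\delta=\Theta(1/H)$, so the expected cost of misclassification, bounded by $\delta H$, contributes only an additive $\cO(1)$.

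The crux is \emph{routing correctness}: with probability $1-\delta$, the true task must remain inside the branch selected at every node of its path, so it reaches its own singleton leaf. Two error sources must be reconciled. First, the tree is built on misspecified means: Lemma~\ref{thr:estimation_lemma} gives $|\hat\mu_{ik}-\mu_{ik}|\le \EVP[|x^\top\hat\theta_{ik}-x^\top\theta_{ik}|]\le \lambda/4$ for all $i,k$ w.h.p. Second, at test time the arm mean of $\nu_{i^\star}$ is re-estimated from $N_{\cls}$ fresh samples; by Hoeffding and a union bound over the $\le\depth$ visited nodes, $N_{\cls}=\cO(\log(\depth/\delta)/\lambda^2)$ makes every test statistic $\lambda/4$-accurate. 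I would then show the \emph{soft split} exactly absorbs these two $\lambda/4$ deviations: if the agent branches left because its estimate is $\le b$, then $\hat\mu_{i^\star k}\le \mu_{i^\star k}+\lambda/4\le b+\lambda/2$, so $\hat\nu_{i^\star}\in S^+$, and the symmetric argument handles the right branch. Iterating along the path, $\hat\nu_{i^\star}$ is never discarded and the terminal leaf is $\{i^\star\}$.

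Finally I would convert depth into the stated complexity. Lemma~\ref{thr:tree_approximation} gives $\depth=\cO((\log M+1)\,C^*_{\lambda}(\hat\Mset))$, and I would argue the complexity measure is robust to misspecification, relating $C^*_{\lambda}(\hat\Mset)$ to $C^*_{\lambda}(\Mset)$ up to constants, since Lemma~\ref{thr:estimation_lemma} preserves the separation structure so a near-optimal classification tree for $\Mset$ transfers to $\hat\Mset$. Substituting $\depth$ and $N_{\cls}=\cO(\log(\depth H)/\lambda^2)$ into the classification-regret bound, using $\log(\depth H)=\cO(\log(C^*_{\lambda}(\Mset)MH))$ and folding the $\log M$ factor into a second logarithm, yields the rate $\cO(\lambda^{-2}C^*_{\lambda}(\Mset)\log^2(C^*_{\lambda}(\Mset)MH))$, with the $\cO(1)$ exploitation term absorbed. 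The main obstacle I anticipate is the bookkeeping of constants in the routing argument: the soft-split half-width $\lambda/2$, the two $\lambda/4$ error budgets, and the residual $\lambda/2$-separation of $\hat\Mset$ must be calibrated so the tree both separates the perturbed task set into singletons \emph{and} tolerates test-time noise — a tension absent from the idealized ECE analysis, requiring the estimation tolerance, the sampling accuracy, and the band width to be chosen consistently.
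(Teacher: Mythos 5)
Your proposal is correct and follows essentially the same route as the paper's proof: the same decomposition into classification-failure and exploitation-failure events each controlled at level $\cO(1/H)$, the same use of Lemma~\ref{thr:estimation_lemma} for the meta-training misspecification and Hoeffding plus a union bound over the $\le \depth$ visited nodes for the test-time statistics, the same soft-split absorption argument, and the same conversion of depth to $C^*_\lambda$ via Lemma~\ref{thr:tree_approximation}. The only differences are cosmetic constant bookkeeping (your $\lambda/4+\lambda/4$ budget versus the paper's $\lambda/2+\lambda/2$ against the $\lambda$ band), and you are in fact slightly more careful than the paper in flagging that $C^*_\lambda(\hat\Mset)$ must be related back to $C^*_\lambda(\Mset)$.
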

The result above shows that DT-ECE matches the regret of ECE with a factor $C^*_{\lambda} (\Mset)$ in place of the \emph{classification-coefficient} $C_\lambda (\Mset)$. This implies an additional $\log (M)$ factor at most (see \ref{eq:depth_classification_coefficient}). This means the estimation error does not significantly affect the regret, thanks to the guarantee in Lemma~\ref{thr:estimation_lemma}. Finally, the regret holds in a contextual bandit setting, but does not depend on the size of the context $d$, which only impacts the meta training complexity.

\section{Experiments}
\label{sec:experiments}

\begin{figure*}[t]
    \centering
    \begin{subfigure}[t]{\textwidth}
    \centering
    \includegraphics{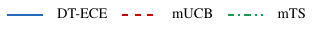}
    \end{subfigure}

    \centering
    \vspace{-5pt}
	\begin{subfigure}[t]{0.24\textwidth}
		\centering
		\includegraphics[scale=1]{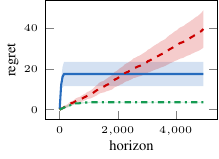}
        \vspace{-5pt}
		\caption{\texttt{hard-5-10} \ $\lambda = 0.4$}
		\label{fig:hard1}		
	\end{subfigure}
    \centering
	\begin{subfigure}[t]{0.24\textwidth}
		\centering
		\includegraphics[scale=1]{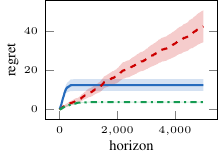}
        \vspace{-5pt}
		\caption{\texttt{hard-5-10} \ $\lambda = 0.04$}
		\label{fig:hard2}
	\end{subfigure}
    \begin{subfigure}[t]{0.24\textwidth}
		\centering
        \includegraphics[scale=1]{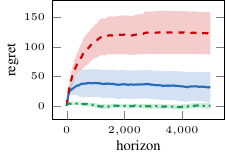}
        \vspace{-5pt}
		\caption{\texttt{rand-10-20} \ $\lambda = 0.4$}
		\label{fig:random1}
	\end{subfigure}
    \centering
	\begin{subfigure}[t]{0.24\textwidth}
		\centering
        \includegraphics[scale=1]{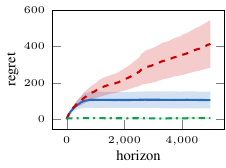}
        \vspace{-5pt}
		\caption{\texttt{rand-40-40} \ $\lambda = 0.4$}
		\label{fig:random2}
	\end{subfigure}

    \vspace{-5pt}
	\caption{Regret of DT-ECE (ours), mUCB~\citep{lazaric2013sequential}, mTS~\citep{hong2020latent}. Captions report \texttt{envname-M-K}, denoting the name of the collection of bandits, the size of the collection, and the number of arms, respectively, together with the value of the separation parameter $\lambda$. The curves average 20 independent runs, shaded regions are 95\% c.i.}
	\label{fig:experiments}
\end{figure*}

In this section, we provide a brief numerical validation to illustrate how the above theoretical analysis on the classification view of meta learning bandits translates to compelling empirical results, which we compare with previous methods in the literature of latent bandits~\citep{hong2020latent}.

To the purpose of the experiments, we consider a non-contextual stochastic MAB setting in which the collection of bandits is fully known, without covering class misspecifications. We design two family of collections, one inspired by the hard instance presented in Section~\ref{sec:lower_bound}, which we henceforth call \texttt{hard}, and one randomly generated collection, which we call \texttt{rand}. For the former, we consider two instances with size $M = 5$ and arms $K = 10$, with varying values of the separation parameters $\lambda$ ($0.4$ and $0.04$ respectively). For the latter, we consider a small instance $M = 10, K = 20$ and a large instance $M=40, K = 40$. We use rejection sampling to control $\lambda$ (set to $0.4$) in the randomly generated collection. In all the considered instances, the reward distributions are Bernoulli.

We compare the regret suffered by our decision tree implementation of the \emph{Explicit Classify then Exploit} routine (DT-ECE, described in Section~\ref{sec:ECE_implementation} and Algorithm~\ref{alg:DT_ECE} of Apx.~\ref{apx:dt_ece}) with traditional bandit approaches, {\it i.e.}, mUCB~\citep{lazaric2013sequential} and mTS~\citep{hong2020latent}. The latter algorithms adapt UCB and Thompson sampling to the meta/latent bandits setting. While they are and not designed to take advantage of separation specifically, they exploit knowledge of the collection of bandits and they constitute relatively strong baselines.
Before going ahead with the experimental results, it is worth spending a few words on how the \emph{spirit} of our algorithm differs to theirs. DT-ECE is designed to produce easy-to-interpret exploration plans, which can be entirely pre-computed offline. Instead, the exploration prescribed by mUCB and mTS is hardly interpretable nor predictable, making them and DT-ECE orthogonal solutions for different applications rather than direct challengers. It is satisfying, however, to see that DT-ECE performance is on par with such renowned algorithms.

In Figure~\ref{fig:experiments} (a, b) we see that DT-ECE achieves a small regret by classifying the test task in a handful of interactions (coarsely, the classification occurs at the elbow of the curves) both when separation is large (a) or small (b). DT-ECE is able to commit to the optimal strategy even before mTS, whose posterior takes slightly longer to converge around the test task, although DT-ECE suffers larger regret due to pure exploration. The most important trait of the \texttt{hard} instance is that optimal actions and informative actions do not overlap, so that optimistic strategy like mUCB are bound to fail. By mostly pulling nearly optimal yet non-informative actions, mUCB cannot identify the test task efficiently, and the regret grows steady. Optimism works considerably better in the \texttt{rand} family (Figure~\ref{fig:experiments} c, d), although mUCB does not match the efficiency of DT-ECE and mTS in those experiments either. It is remarkable that DT-ECE can classify the test task into a set of 40, with 40 arms each, by taking less than 1000 samples on average (d).

Finally, DT-ECE comes with sharp theoretical guarantees and it is designed for the worst case, which can limit the performance of the algorithm in more forgiving instances (such as the \texttt{rand} family). However, the design of a fully practical version of the ECE ideas is beyond the scope of this paper and constitute interesting matter for future studies.

\section{Related work}
\label{sec:related_work}

To the best of our knowledge, our classification view of meta learning bandits under separation is original and has not been studied. There are anyway several connections between our results and the literature, which we revise below.

\textbf{Contextual bandits.}~~
Obviously, our setting relates to \emph{contextual} bandits~\citep{wang2005bandit,li2010contextual,abbasi2011improved,hao2020adaptive} and, indeed, our results hold for the contextual setting. The contextual nature of individual tasks is an orthogonal dimension w.r.t. a second, \emph{unobserved} context typical of meta learning settings: The task description itself.

\textbf{Latent bandits.}~~
The setting that most closely relates to ours is \emph{latent bandits}~\citep{lazaric2013sequential, maillard2014latent, zhou2016latent, hong2020latent, hong2020non, pal2023optimal}. Actually, our setting can be seen as a particular instance of latent bandits under separation and a meta learning protocol. \citet{lazaric2013sequential, maillard2014latent} also consider bandit tasks coming from a finite and known set, with or without misspecification. They do not consider separation, which allows to specialize the regret from $\cO(\sqrt{H})$ to $\cO (\log H)$. Similarly to ours, the setting in~\citep{zhou2016latent} includes a phase in which the models are learned from data and then exploited on future tasks. In their formulation, however, the tasks are coming into a sequence online, so that the meta learning itself adds to the regret instead of being carried out offline. An offline learning phase is considered by~\citet{hong2020latent} in a problem formulation that almost perfectly matches ours, yet leads to mostly orthogonal results: They do not consider separation; Their analysis is not instance-dependent and does not tie the regret to the classification complexity of the instance; They consider traditional UCB/TS-style algorithms in place of our ECE; They do not detail the meta training algorithm. Most importantly, our classification view is original in the latent bandits literature and constitutes the main novelty of our work.

\textbf{Low-rank bandits.}~~
\emph{Low-rank bandits}~\citep{kveton2017stochastic, lale2019stochastic, lu2021low} essentially generalize the latent bandits formulation (and ours) by assuming the existence of a low-rank latent representation conditioning the arms payoffs. Just like in latent bandits, previous works do not touch on the connection between classification and regret, which may be generalized to low-rank bandits.

\textbf{Structured bandits.}~~In \emph{structured bandits}~\citep{lattimore2014bounded, combes2017minimal, tirinzoni2020novel} the rewards of the arms are correlated according to a known structure \emph{class} with hidden parameters. These parameters have some similarity of the hidden task context of our setting (and latent bandits). Our results connecting classification and regret may be generalized to structured bandits.

\textbf{Thompson sampling.}~~
Extensive work has been done over exploiting prior knowledge in bandits through Bayesian approaches.
The most notable is Thompson sampling~\citep{thompson1933,kaufmann2012thompson,agrawal2012analysis,russo2016information}, in which knowledge over the test task is incorporated into a prior. The set of tasks of our setting can be seen as a prior, although our results are in a frequentist setting. As such, they are independent from the prior distribution and robust to misspecifications, differently from Thompson sampling~\citep{simchowitz2021bayesian}.

\textbf{Meta learning bandits.}~~
Meta learning bandits has been considered in~\citep{kveton2021meta, hong2022hierarchical, hong2022deep} where tasks are assumed to come from an unknown prior. The agent aims to infer the prior from interaction, assuming it is itself coming from a known hyper-prior. This can be seen as a Bayesian version of our setting, where the hyper-prior stands for the set of tasks, and the priors play the role of the tasks. 
Related to this stream, other works~\citep{cella2020meta, basu2021no} have considered meta learning a prior over tasks for regret minimization.


\section{Conclusion}


In this paper, we took an original \emph{classification view} on the problem of meta learning bandits under separation. Thanks to this novel approach, our work delivers on its promise of providing  principled algorithms for learning \emph{interpretable} and \emph{efficient} exploration plans from offline data, just like they were designed by humans. As a by product to this effort, we contribute an elegant \emph{framework} to study the regret of learning algorithms through the complexity of classifying the task \emph{online} within a set of previously seen tasks.

We believe the significance of our findings are hardly limited to the considered contextual multi-armed bandits, and that they may inspire future works targeting yet more general problem settings (and corresponding applications) by following our blueprint for meta learning with classification.

A natural next step is to introduce dynamics over contexts to extend the framework to full-fledged Markov Decision Processes (MDPs) and reinforcement learning, where we would consider a test MDP coming from a collection of MDPs, known a priori or accessed offline. A framework of similar kind has been introduced under the name of \emph{contextual} MDPs~\citep{hallak2015contextual} and latent MDPs~\citep{kwon2021rl, kwon2021reinforcement, kwon2023prospective, kwon2023reward, kwon2024rl}. Previous works have also studied meta learning policies for efficient exploration in MDPs and their regret~\citep{chen2021understanding, ye2023power, mutti2024test-time}. None of the above has considered our classification view of the problem to get efficient and interpretable exploration plans. In the MDP setting, our decision tree classifier resembles a hierarchical strategy deploying policies, or \emph{options}~\citep{sutton1999between}, to probe information-revealing states of the environment. Can these policies be learned with a tractable offline algorithm? Would the exploration plan enjoy similar regret guarantees beyond the contextual MAB setting? This is an exciting direction with the potential to open the door to countless applications, such as autonomous driving, robotics, and many others.



\section*{Acknowledgements}

This research was partly funded by the European Union (ERC, Bayes-RL, 101041250). Views and opinions expressed are however those of the author(s) only and do not necessarily reflect those of the European Union or the European Research Council Executive Agency (ERCEA). Neither the European Union nor the granting authority can be held responsible for them.




\bibliography{biblio}
\bibliographystyle{icml2025}

\newpage
\appendix
\onecolumn

\section{Auxiliary Lemmas}

\newcommand{\mD}{\mathcal{D}}
\newcommand{\PP}{\mathbb{P}}
\newcommand{\Exs}{\mathbb{E}}
\newcommand{\obs}{o}

The following lemma is the famous Ville's inequality for super-martingales:
\begin{lemma}[Ville's Inequality]
\label{lemma:martingale_concentration}
    Let $\{W_t\}_{t\ge 0}$ be a non-negative super-martingale sequence, such that
    \begin{align*}
        \Exs[W_{t+1} | W_{t}] \le W_t,
    \end{align*}
    for any $\delta > 0$, the following holds:
    \begin{align*}
        \PP\left(\forall t, W_t \le W_0/\delta \right) \ge 1 - \delta.
    \end{align*}
\end{lemma}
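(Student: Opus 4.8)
The plan is to prove this as a maximal inequality via an optional-stopping argument for non-negative supermartingales, interpreting the stated hypothesis $\Exs[W_{t+1}\mid W_t]\le W_t$ as the supermartingale property $\Exs[W_{t+1}\mid\mathcal{F}_t]\le W_t$ with respect to the natural filtration $\mathcal{F}_t=\sigma(W_0,\dots,W_t)$. Since the event $\{\forall t,\ W_t\le W_0/\delta\}$ is exactly the complement of $\{\exists t,\ W_t> W_0/\delta\}$, it suffices to show that the process ever exceeds the level $a:=W_0/\delta$ with probability at most $\delta$.

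First I would introduce the first-crossing stopping time $\tau:=\inf\{t\ge 0:\ W_t> a\}$, so that $\{\exists t,\ W_t>a\}=\{\tau<\infty\}$. The central object is the stopped process $W_{t\wedge\tau}$, which is again a non-negative supermartingale; consequently $\Exs[W_{t\wedge\tau}]\le\Exs[W_0]=W_0$ for every finite horizon $t$. This is the only step where the supermartingale hypothesis is actually invoked.

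Next I would lower bound $\Exs[W_{t\wedge\tau}]$ by discarding all mass off the event $\{\tau\le t\}$. On that event $W_\tau> a$ by the definition of $\tau$, while on $\{\tau>t\}$ the contribution is non-negative by non-negativity of $W$; hence $\Exs[W_{t\wedge\tau}]\ge a\,\PP(\tau\le t)$. Chaining this with the previous bound gives $\PP(\tau\le t)\le W_0/a=\delta$ for every $t$. Since the events $\{\tau\le t\}$ increase to $\{\tau<\infty\}$, monotone continuity of measure yields $\PP(\tau<\infty)\le\delta$, that is $\PP(\exists t,\ W_t>a)\le\delta$, and taking complements delivers $\PP(\forall t,\ W_t\le W_0/\delta)\ge 1-\delta$.

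The main obstacle here is purely technical rather than conceptual: one must verify that $W_{t\wedge\tau}$ is a genuine non-negative supermartingale (the optional-stopping step for the bounded stopping time $t\wedge\tau\le t$, with integrability immediate from non-negativity and $\Exs[W_{t\wedge\tau}]\le W_0<\infty$) and then justify the passage $t\to\infty$ via continuity of measure along an increasing sequence of events. The discrete-time setting keeps the crossing clean, since the lower bound only requires $W_\tau> a$, which holds by definition at the first index where the level $a$ is exceeded; thus no path-regularity or overshoot-control argument is needed.
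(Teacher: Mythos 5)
Your proof is correct and is the classical optional-stopping proof of Ville's maximal inequality: stop at the first crossing of the level $a = W_0/\delta$, use $\Exs[W_{t\wedge\tau}]\le W_0$ together with the lower bound $a\,\PP(\tau\le t)$ coming from non-negativity, and pass to the limit by continuity of measure. The paper states this lemma as a known result and supplies no proof of its own, so there is nothing to compare against; your reading of the hypothesis $\Exs[W_{t+1}\mid W_t]\le W_t$ as the supermartingale property with respect to the natural filtration (and of $W_0$ as deterministic, so that $\Exs[W_0]=W_0$) is exactly the interpretation the paper relies on when it applies the lemma to $W_t=(1+\tfrac12 C_\lambda(\Pi_{\mathcal C})^{-1})^t|S_t|$ with $W_0=M$.
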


The following lemmas are the standard concentration of log-likelihood values of the models within the confidence set. The proofs are standard in model-based RL and can also be found in ({\it e.g.}, \citealt{liu2022partially, agarwal2020flambe}). We let $\mD$ be the observational data $o=(x,k,r)$ collected by running $\pi$ on some underlying distribution $\nu^* \in \Mset$. We denote $\beta := \log(M/\delta)$. Then, the following holds:
\begin{lemma}[Uniform Bound on the Likelihood Ratios]
    \label{lemma:mle_traj_concentration}
    With probability $1 - \delta$ for any $\delta > 0$, for any $\nu \in \Mset$, 
    \begin{align}
        \sum_{\obs \in \mD} \log(\PP^\pi_{\nu} ( \obs)) - \beta \le \sum_{\obs \in \mD} \log(\PP^\pi_{\nu^*} (\obs) ).  
    \end{align}
\end{lemma}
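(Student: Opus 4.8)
The plan is to exhibit, for each fixed model $\nu \in \Mset$, a nonnegative supermartingale built from the running likelihood ratio against the data-generating model $\nu^*$, control it with high probability via Ville's inequality (Lemma~\ref{lemma:martingale_concentration}), and finally union bound over the $M$ candidates. Concretely, I would order the observations in $\mD$ as $\obs_1, \obs_2, \ldots$, let $\mathcal{F}_t$ be the filtration generated by $\obs_1, \ldots, \obs_t$, and for a fixed $\nu$ define the likelihood-ratio process
\[
    W_t^\nu := \prod_{s \le t} \frac{\PP^\pi_\nu(\obs_s)}{\PP^\pi_{\nu^*}(\obs_s)}, \qquad W_0^\nu := 1.
\]

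First I would verify the supermartingale property. Since the samples are drawn by running the fixed test $\pi$ on the true model $\nu^*$, each $\obs_{t+1}$ is distributed according to $\PP^\pi_{\nu^*}$ conditionally on $\mathcal{F}_t$, so
\[
    \Exs[W_{t+1}^\nu \mid \mathcal{F}_t] = W_t^\nu \, \Exs_{\obs \sim \PP^\pi_{\nu^*}}\!\left[ \frac{\PP^\pi_\nu(\obs)}{\PP^\pi_{\nu^*}(\obs)} \right] = W_t^\nu \int_{\mathrm{supp}(\PP^\pi_{\nu^*})} \PP^\pi_\nu(\obs)\, d\obs \le W_t^\nu,
\]
where the final inequality is just the fact that $\PP^\pi_\nu$ integrates to at most one over any subset of its domain (the usual change-of-measure identity). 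This is the only place where the data-generating mechanism enters, and it is the step I would write most carefully: the argument requires the observations in $\mD$ to be genuinely sampled from $\nu^*$ under $\pi$, which is exactly the setup of the lemma.

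Next I would apply Ville's inequality to $W_t^\nu$ with confidence parameter $\delta/M$: since $W_0^\nu = 1$, it gives $\PP(\forall t,\ W_t^\nu \le M/\delta) \ge 1 - \delta/M$. Taking a union bound over the $M$ models in $\Mset$ yields, with probability at least $1-\delta$, that $W_t^\nu \le M/\delta$ simultaneously for every $\nu \in \Mset$ and every $t$. Taking logarithms and recalling $\beta = \log(M/\delta)$, this reads
\[
    \sum_{\obs \in \mD} \log \PP^\pi_\nu(\obs) - \sum_{\obs \in \mD} \log \PP^\pi_{\nu^*}(\obs) \le \beta,
\]
which is precisely the claimed bound after rearranging.

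The proof is essentially routine once the supermartingale is identified; the step I would treat as the main obstacle is ensuring the conditional-expectation computation remains valid when the data is collected in batches, as in Algorithm~\ref{algo:LLT}, where $N_{\cls}$ i.i.d. samples are drawn per test. Since within a single test the policy $\pi$ is fixed and the samples are i.i.d. from $\PP^\pi_{\nu^*}$, the martingale argument goes through verbatim, and the \emph{for all $t$} guarantee of Ville's inequality is what lets the bound hold uniformly over the (random) size of $\mD$ without any additional union bound over sample counts.
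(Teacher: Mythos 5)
Your proof is correct. The heart of your argument and the paper's is the same change-of-measure computation: under the data-generating model $\nu^*$, the likelihood ratio $\prod_{\obs\in\mD}\PP^\pi_{\nu}(\obs)/\PP^\pi_{\nu^*}(\obs)$ has expectation at most one, and exponentiating/Markov turns this into the $\beta=\log(M/\delta)$ deviation bound after a union bound over $\Mset$. The only difference is packaging: the paper applies a one-shot Chernoff--Markov bound to the terminal likelihood ratio of the fixed dataset $\mD$, whereas you run the same quantity as a supermartingale $W_t^{\nu}$ and invoke Ville's inequality (Lemma~\ref{lemma:martingale_concentration}). Your version buys a guarantee uniform over $t$, which is slightly stronger than needed here since $|\mD|=N_{\cls}$ is fixed within each classification round, but it is a perfectly valid and arguably cleaner route; your verification of the supermartingale step $\Exs_{\obs\sim\PP^\pi_{\nu^*}}[\PP^\pi_\nu(\obs)/\PP^\pi_{\nu^*}(\obs)]\le 1$ is exactly the computation the paper performs when it shows $\Exs_{\nu^*}\bigl[\prod_{\obs\in\mD}\PP^\pi_{\nu}(\obs)/\PP^\pi_{\nu^*}(\obs)\bigr]=1$.
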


\begin{lemma}[Concentration of Maximum Likelihood Estimators]
    \label{lemma:concentration_statistical_distance}
    With probability $1-\delta$, for all $\nu \in \Mset$, we have
    \begin{align*}
        & D_\texttt{H}^2 \left(\PP_{\nu}^{\pi}, \PP_{\nu^*}^{\pi} \right) \le \frac{1}{2 |\mathcal{D}|} \left( \sum_{\obs \in \mathcal{D}} \log \left( \frac{\PP_{\nu^*}^{\pi}(\obs)}{\PP_{\nu}^{\pi}(\obs)} \right) + 3\beta\right).
    \end{align*}
\end{lemma}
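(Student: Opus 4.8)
The plan is to prove this by the standard exponential supermartingale argument for maximum-likelihood estimators, combining a one-observation Hellinger-affinity identity with Ville's inequality (Lemma~\ref{lemma:martingale_concentration}) and a union bound over $\Mset$. First I fix a model $\nu \in \Mset$ and recall that $\mD = \{\obs_s\}$ is generated i.i.d.\ by running the fixed policy $\pi$ on the true model $\nu^*$, so the joint likelihoods factorize as $\PP_\nu^\pi(\mD) = \prod_s \PP_\nu^\pi(\obs_s)$. Writing $\rho_\nu := \Exs_{\obs \sim \PP_{\nu^*}^\pi}[\sqrt{\PP_\nu^\pi(\obs)/\PP_{\nu^*}^\pi(\obs)}] = \int \sqrt{\PP_\nu^\pi \, \PP_{\nu^*}^\pi}$ for the Hellinger affinity, the elementary one-observation identity gives $\rho_\nu = 1 - D_\texttt{H}^2(\PP_\nu^\pi,\PP_{\nu^*}^\pi)$, up to the normalization convention fixed in the preliminaries.

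Second, I build the process
\[
    W_t := \exp\!\Big( \tfrac{1}{2}\sum\nolimits_{s\le t}\log\tfrac{\PP_\nu^\pi(\obs_s)}{\PP_{\nu^*}^\pi(\obs_s)} + t\, D_\texttt{H}^2(\PP_\nu^\pi,\PP_{\nu^*}^\pi)\Big), \qquad W_0 = 1.
\]
Conditioning on the history and using that $\obs_{t+1}$ is drawn from $\PP_{\nu^*}^\pi$, the multiplicative increment contributes exactly the affinity, so $\Exs[W_{t+1}\mid \mathcal{F}_t] = W_t\, e^{D_\texttt{H}^2}\rho_\nu = W_t\, e^{D_\texttt{H}^2}(1 - D_\texttt{H}^2)$. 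The inequality $e^{x}(1-x)\le 1$ for $x \ge 0$ then certifies that $\{W_t\}$ is a nonnegative supermartingale with $W_0 = 1$, so Ville's inequality yields $\PP(\forall t,\ W_t \le 1/\delta) \ge 1-\delta$.

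Third, I take logarithms of $W_{|\mD|}\le 1/\delta$, move the log-likelihood-ratio sum to the right side, and divide by $|\mD|$ to obtain
\[
    D_\texttt{H}^2(\PP_\nu^\pi,\PP_{\nu^*}^\pi) \le \frac{1}{2|\mD|}\Big( \sum\nolimits_{\obs\in\mD}\log\tfrac{\PP_{\nu^*}^\pi(\obs)}{\PP_\nu^\pi(\obs)} + 2\log(1/\delta)\Big).
\]
Finally I apply a union bound over the $M$ models in $\Mset$, replacing $\delta$ by $\delta/M$, which turns $2\log(1/\delta)$ into $2\beta$ with $\beta = \log(M/\delta)$; this is absorbed into the stated $3\beta$ with room to spare, giving the claim simultaneously for all $\nu \in \Mset$. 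The main subtlety to watch is that the supermartingale property must hold \emph{uniformly in $t$} so that Ville's inequality delivers an any-time high-probability bound (rather than a fixed-$t$ Markov bound), and that the Hellinger normalization is tracked carefully so that the affinity identity $\rho_\nu = 1 - D_\texttt{H}^2$ and the resulting constant $1/(2|\mD|)$ come out exactly as stated; the remainder is routine.
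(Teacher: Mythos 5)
Your argument is correct and is essentially the paper's own proof in supermartingale clothing: both rest on the facts that $\Exs_{\nu^*}\bigl[\prod_{\obs\in\mD}\sqrt{\PP_\nu^\pi(\obs)/\PP_{\nu^*}^\pi(\obs)}\bigr]$ equals the Hellinger affinity raised to the power $|\mD|$ and that $D_\texttt{H}^2 \le -\log\rho_\nu$ (your $e^{x}(1-x)\le 1$ is the same inequality), followed by an exponential Markov/Chernoff bound and a union bound over $\Mset$. Using Ville's inequality instead of a fixed-$|\mD|$ Markov bound is a cosmetic strengthening, and your $2\beta$ is indeed absorbed by the stated $3\beta$.
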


\section{Proofs}
\label{apx:proofs}

\subsection{Proofs of Section~\ref{sec:ECE}}
\label{apx:proofs_meta_to_classification}

\subsubsection{Proof of Theorem \ref{theorem:classification_upper_bound}}
We first analyze whether the true model $m^*$ remains in the hypothesis class for all $T$ rounds. To see this, by Lemma \ref{lemma:mle_traj_concentration}, for all $i \in S_t$ and $t \in [T]$, we have
\begin{align*}
    \sum_{\obs \in \mD_t} \log(\PP^\pi_{i} ( \obs)) - \beta \le \sum_{\obs \in \mD_t} \log(\PP^\pi_{m^*} (\obs) ),
\end{align*}
where $\beta = \log(MT/\delta)$. Hence, due to our construction of the next hypothesis set in Algorithm \ref{algo:LLT}, with probability $1 - \delta/T$, $m^* \in S_{t+1}$. As the worst-case classification round does not exceed $M$ with Assumption \ref{assumption:distance_separation}, without loss of generality, we assume that $T = O(M)$.  

Next, for every $t^{th}$ round, we prove that $S_{t+1} \subseteq S_t / \bar{S}_{t,\lambda}^{\pi_t}(m^*)$ where
\begin{align*}
    \bar{S}_{t,\lambda}^{\pi}(m^*) = \{i \in S_{t} | D_{\texttt{H}}(\PP^{\pi_t}_i, \PP^{\pi_t}_{m^*}) \ge \lambda \}. 
\end{align*}
Note that with probability $1-\delta/T$,
\begin{align*}
    0 \ge \sum_{\obs \in \mD_t} \log(\PP^\pi_{m^*} (\obs) ) - \sum_{\obs \in \mD_t} \log(\PP^\pi_{\hat{m}_t} (\obs) ) \ge -\beta,
\end{align*}
for all $i \in S_t$. From Lemma \ref{lemma:concentration_statistical_distance}, for all $i \in S_{t+1}$, by taking union bound, it must satisfy that
\begin{align*}
    \beta \ge \sum_{\obs \in \mD_t} \log \left( \frac{\PP^\pi_{m^*} (\obs)}{\PP^\pi_{i} (\obs)} \right) \ge 2 N_{\cls} \cdot D_{\texttt{H}}^2 (\PP^{\pi_t}_i, \PP^{\pi_t}_{m^*}) - 3\beta,
\end{align*}
where the first inequality holds due to our construction of $S_{t+1}$. Thus, for all $i \in S_{t+1}$, we must have
\begin{align*}
    D_{\texttt{H}}^2 (\PP^{\pi_t}_i, \PP^{\pi
    _t}_{m^*}) \le \frac{2\beta}{N_{\cls}}, \quad \forall i \in S_{t+1}. 
\end{align*}
This means with $N_{\cls} > 2 \frac{\log(M/\delta)}{\lambda^2}$ test samples per round, $S_{t+1} \subseteq S_{t} / \bar{S}_{t,\lambda}^{\pi_t}(m^*)$. 

Finally, with our design of $\pi_t$, we always choose $\pi_t$ such that 
\begin{align*}
    |\bar{S}_{t,\lambda}^{\pi_t}(m^*)| \ge C_{\lambda}(\Pi_{\mathcal{C}})^{-1} \cdot |S_t|. 
\end{align*}
This implies with probability at least $1-\delta/T$, we always have
\begin{align*}
    \frac{|S_{t+1}|}{|S_t|} \le 1 - C_{\lambda}(\Pi_{\mathcal{C}})^{-1}, 
\end{align*}
which translates to 
\begin{align*}
    \Exs\left[\frac{|S_{t+1}| }{|S_t|} | S_t\right] \le 1 - \frac{1}{2} C_{\lambda}(\Pi_{\mathcal{C}})^{-1}.
\end{align*}
Note that in the worst case, the ratio remains $1$ with probability less than $\delta/T$. Let $W_t := \left(1+\frac{1}{2} C_{\lambda}(\Pi_\mathcal{C})^{-1} \right)^t |S_t|$. Then $\{W_t\}_{t\ge0}$ is a super-martingale, and thus, by Lemma \ref{lemma:martingale_concentration}, we have
\begin{align*}
    \left(1+\frac{1}{2} C_{\lambda}(\Pi_{\mathcal{C}})^{-1} \right)^T |S_{T}| \le \frac{1}{\delta} |S_0|,
\end{align*}
with probability at least $1 - \delta$. Under this success event, as soon as $T > 2 C_{\lambda}(\Pi_{\mathcal{C}}) \cdot \log(M/\delta)$, we must have $|S_T|=1$. 

To summarize, if we use $N_{\cls} = O(\lambda^{-2} \cdot \log(M/\delta)$ samples per classification round for $T = O(C_{\lambda}(\Pi_{\mathcal{C}}) \cdot \log(M/\delta))$ rounds, the algorithm terminates with the correct task identifier $m^*$ with probability at least $1 - \delta$, concluding the proof.

\subsubsection{Proof of Lemma \ref{lemma:dec_lower_bound}}

Following the definition of DEC in \eqref{eq:dec_definition}, we have that
\begin{align*}
    \texttt{dec}_{\gamma} (\Mset) &\ge \max_{S \in 2^{[M]}} \min_{\pi \sim \Delta([K])} \max_{i \in S} \mathbb{E}_{k \sim \pi} [\Delta_i(k)] - \gamma \mathbb{E}_{k\sim\pi, m\sim \mathcal{U}(S) } [D_{\texttt{H}}^2 (\nu_i(k), \nu_m(k)) ].
\end{align*}

Recall that the randomized coefficient in \eqref{eq:ran_classification_coeff} can be rewritten as the following: 
\begin{align*}
    \widetilde{C}(\Mset) = \left( \min_{S \in 2^{[M]}, |S|>1} \max_{\pi \sim \Delta(\mathcal{A}_{\lambda})} \min_{i \in S} \Exs_{m\sim \mathcal{U}(S)} [\Exs_{k \sim \pi} \left[ \mathbf{1}\{\mu_i(k) \neq \mu_m(k)\} \right]] \right)^{-1},
\end{align*}
and let $S_{adv}$ be the outer solution of the above min-max-min optimization. Now for any $\pi\in\Delta([K])$, let $i^*(\pi)$ be the one that achieves
\begin{align}
    i^*(\pi) := \arg\min_{i\in S_{adv}} \Exs_{m\sim \mathcal{U}(S)} [\Exs_{k \sim \pi} \left[ \mathbf{1}\{\mu_i(k) \neq \mu_m(k)\} \right]]. \label{eq:i_star_def}
\end{align}
We claim that there must exist $\bar{i}(\pi) \in S_{adv} / \{i^*(\pi)\}$ such that the following holds:
\begin{align}
    \Exs_{m\sim \mathcal{U}(S)} [\Exs_{k \sim \pi} \left[ \mathbf{1}\{\mu_{\bar{i}} (k) \neq \mu_m(k)\} \right]] \le 4 \Exs_{m\sim \mathcal{U}(S)} \left[\Exs_{k \sim \pi} [ \mathbf{1}\{\mu_{i^*(\pi)} (k) \neq \mu_m(k)\} ]\right]. \label{eq:bari_pii_inequality}
\end{align}
To see this, note that 
\begin{align*}
    \mathbf{1}\{\mu_{\bar{i}} (k) \neq \mu_m(k)\} \le \mathbf{1}\{\mu_{\bar{i}} (k) \neq \mu_{i^*(\pi)} (k)\} + \mathbf{1}\{\mu_{i^*(\pi)} (k) \neq \mu_m(k)\},
\end{align*}
and then, by taking $\bar{i} (\pi) := \arg\min_{i \in S_{adv} / \{i^*(\pi)\}} \Exs_{k \sim \pi} [\mathbf{1}\{\mu_{i^*(\pi)}(k) \neq \mu_{\bar{i}}(k)\} ]$, we can verify that 
\begin{align*}
    \Exs_{k \sim \pi} [\mathbf{1}\{\mu_{i^*(\pi)}(k) \neq \mu_{\bar{i}}(k)\} ] \le 2 \Exs_{m\sim \mathcal{U}(S_{adv})} \left[\Exs_{k \sim \pi} [ \mathbf{1}\{\mu_{i^*(\pi)} (k) \neq \mu_m(k)\} ]\right],
\end{align*}
since $|S_{adv}|>1$ and the indicator function is nonnegative. Note that for all $\pi \in \Delta(\mathcal{A}_{\lambda})$, by construction, $\Exs_{m\sim \mathcal{U}(S)} \left[\Exs_{a \sim \pi} [ \mathbf{1}\{\mu_{i^*(\pi)} (a) \neq \mu_m(a)\} ]\right]\le \widetilde{C}(\Mset)^{-1}$.

Now going back to the DEC lower-bound, we have
\begin{align}
    \texttt{dec}_{\gamma}(\Mset) &\ge \min_{\pi \in \Delta([K])} \max_{i\in S_{adv}} \Exs_{k \sim \pi}[\Delta_i(k)] - \gamma \Exs_{k \sim \pi} [ \Exs_{m\sim \mathcal{U}(S_{adv})}[D_{\texttt{H}}^2 (\nu_i(k), \nu_m(k))] ] \nonumber \\
    &\ge \min_{\pi \in \Delta([K])} \max_{i\in S_{adv}} \underbrace{\sum_{k \in\{k_m^*\}_m} \Delta_{i} (k) \cdot \pi (k) + \frac{1}{8} \pi(k\in\mathcal{A}_{\lambda})}_{I} \\
    &\quad - \gamma \left(\underbrace{200\epsilon^2 \cdot \pi(k\notin \mathcal{A}_{\lambda}) + \Exs_{m\sim \mathcal{U}(S_{adv})} \left[ \Exs_{k \sim \pi_{\lambda}} [D_{\texttt{H}}^2 (\nu_{i} (k), \nu_m(k))] \right]  \cdot \pi (k \in \mathcal{A}_{\lambda})}_{II}\right), \label{eq:bounding_dec}
\end{align}
where we define $\pi_{\lambda} = \pi(\cdot | k \in \mathcal{A}_{\lambda})$. Now for every $\pi$ and the corresponding $\pi_{\lambda}$, let $i^*(\pi_{\lambda})$ as defined in \eqref{eq:i_star_def} and $\bar{i}(\pi_{\lambda}) = S_{adv} / \{i^*(\pi_{\lambda})\}$. Now we either choose $i = i^*(\pi_{\lambda})$ if
\begin{align*}
    \pi(k_{i^*(\pi_{\lambda})}^*) < \pi(k_{\bar{i} (\pi_{\lambda})}^*),
\end{align*}
and $\bar{i}(\pi_{\lambda})$ in the other case. We divide into two cases.

\paragraph{1. $\pi(k_{i^*(\pi_{\lambda})}^*) < \pi(k_{\bar{i} (\pi_{\lambda})}^*)$:} In the former case, note that for all $m \neq i^*(\pi_{\lambda})$, 
\begin{align*}
    \Delta_{i^*(\pi_{\lambda})}(k_m^*) \ge 10\epsilon,
\end{align*}
and therefore, 
\begin{align*}
    \sum_{a \in \{k_m^*\}_m} \Delta_{i^*(\pi_{\lambda})}(k)  \pi(k) \ge 5\epsilon \pi(k \notin \mathcal{A}_{\lambda}). 
\end{align*}
Therefore, we have $I \ge 5\epsilon \pi(k \notin \mathcal{A}_{\lambda}) + \frac{1}{8} \pi(k \in \mathcal{A}_{\lambda})$ in \eqref{eq:bounding_dec}. 

For the second term, note that 
\begin{align*}
    \Exs_{m\sim \mathcal{U}(S_{adv})} \left[ \Exs_{k \sim \pi_{\lambda}} [D_{\texttt{H}}^2 (\nu_{i^*(\pi_{\lambda}) } (k), \nu_m(k))] \right] &\le \lambda^2 \Exs_{m\sim \mathcal{U}(S_{adv})} \left[ \Exs_{k \sim \pi_{\lambda}} [\mathbf{1} \{\nu_{i^*(\pi_{\lambda}) } (k), \nu_m(k)) \}] \right] \le \lambda^2 \widetilde{C}({\Mset})^{-1}.
\end{align*}
Therefore, the second term becomes $II \le 200\epsilon^2 \pi(k \notin \mathcal{A}_{\lambda}) + \lambda^2 \widetilde{C}(\Mset)^{-1} \pi(k \in \mathcal{A}_{\lambda})$.

\paragraph{2. $\pi(k_{i^*(\pi_{\lambda})}^*) > \pi(k_{\bar{i} (\pi_{\lambda})}^*)$:} In the latter case, repeat the same process except that now we take the worst-case inner-instance $i = \bar{i}(\pi_{\lambda})$, we get the same inequalities.

Combining all results, we can conclude that 
\begin{align*}
    I  - \gamma II \ge (5\epsilon - 200\epsilon^2 \gamma) \pi(k \notin \mathcal{A}_{\lambda}) + \left(\frac{1}{8} - \gamma \lambda^2 \tilde{C}(\Mset)^{-1}  \right) \pi(k\in \mathcal{A}_\lambda) > 3\epsilon,    
\end{align*}
for any $\pi \in \Delta([K])$ with $\gamma \le c_{\gamma} \min\left(\epsilon^{-1},  \lambda^{-2} \widetilde{C}(\Mset) \right)$ for some sufficiently small $c_{\gamma}>0$. Therefore,
\begin{align*}
    \texttt{dec}_{\gamma}(\Mset) > 3\epsilon,
\end{align*}
concluding the proof.

\subsubsection{Proof of Theorem \ref{theorem:classification_lower_bound}}

To identify the optimal arm (so that we can play it for the majority of rounds), it must hold $\texttt{dec}_{\gamma}(\mathcal{M}) < \epsilon$. On the other hand, we have the following lower bound, which is a reminiscent of lower bound results in \cite{chen2022unified} and \cite{foster2021statistical}:
\begin{theorem} \label{theorem:lower_bound_chen}
    For any $\delta \in (0,1)$ and a regret minimization algorithms for $H$ rounds, 
    \begin{align*}
        \reg_H(\Mset) \ge C_2 \cdot \max_{\gamma \ge C_1  \cdot \sqrt{H}} \min \left((\texttt{dec}_{\gamma}(\Mset) - \delta) \cdot H, \gamma \right),
    \end{align*}
    with probability at least $\delta$ for some absolute constant $C_1, C_2 > 0$.
\end{theorem}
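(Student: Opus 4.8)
The plan is to prove this as an information-theoretic regret lower bound in the spirit of \citet{foster2021statistical, chen2022unified}, instantiated for the constrained (PAC) form of the DEC in \eqref{eq:dec_definition}. Fix a target $\gamma \ge C_1\sqrt{H}$ and let $\omega \in \Delta([M])$ be a prior attaining the outer maximum in $\texttt{dec}_{\gamma}(\Mset)$. I would run the learner in a \emph{reference world} whose per-arm reward law is the $\omega$-mixture $\bar\nu(k) := \EV_{m\sim\omega}[\nu_m(k)]$, and let $p \in \Delta([K])$ be the average on-policy action distribution the learner induces over the $H$ rounds in this world, i.e. $p(k) = \frac1H\sum_t \P_{\bar\nu}[k_t = k]$. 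Applying the definition of $\texttt{dec}_{\gamma}$ at $(\omega,p)$ produces a witness index $i^\dagger$ with $\EV_{k\sim p}[\Delta_{i^\dagger}(k)] - \gamma\,\EV_{k\sim p, m\sim\omega}[D_{\texttt{H}}^2(\nu_{i^\dagger}(k), \nu_m(k))] \ge \texttt{dec}_{\gamma}(\Mset)$. Since $\reg_H(\Mset) = \sup_i \reg_H(\nu_i) \ge \reg_H(\nu_{i^\dagger})$, it suffices to lower bound the regret against the single task $\nu_{i^\dagger}$; crucially, the pseudo-regret functional $\Phi := \sum_t \Delta_{i^\dagger}(k_t) \in [0,H]$ is exactly the realized regret when the true world is $\nu_{i^\dagger}$.

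The second step converts the DEC inequality into a statement about the reference world and bounds the information the learner can acquire. Joint convexity of the squared Hellinger distance gives $D_{\texttt{H}}^2(\nu_{i^\dagger}(k), \bar\nu(k)) \le \EV_{m\sim\omega}[D_{\texttt{H}}^2(\nu_{i^\dagger}(k), \nu_m(k))]$, so the information penalty in the DEC upper bounds the on-policy Hellinger to the reference: writing $I := H\,\EV_{k\sim p}[D_{\texttt{H}}^2(\nu_{i^\dagger}(k), \bar\nu(k))]$, the witness inequality yields $\EV_{\bar\nu}[\Phi] = H\,\EV_{k\sim p}[\Delta_{i^\dagger}(k)] \ge H\,\texttt{dec}_{\gamma}(\Mset) + \gamma I$. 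The key technical device is a chain rule for squared Hellinger over the adaptive interaction: because arm laws factorize across rounds conditionally on the history and only the reward kernel differs between $\nu_{i^\dagger}$ and $\bar\nu$, the Bhattacharyya affinity multiplies and $D_{\texttt{H}}^2(\P^{(H)}_{i^\dagger}, \P^{(H)}_{\bar\nu}) \le \sum_t \EV_{\bar\nu}[D_{\texttt{H}}^2(\nu_{i^\dagger}(k_t), \bar\nu(k_t))] = I$, whence $\TV(\P^{(H)}_{i^\dagger}, \P^{(H)}_{\bar\nu}) \le \sqrt{I}$.

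The third step transports a ``large regret'' event from the reference world to the true world $\nu_{i^\dagger}$. A reverse Markov (Paley--Zygmund) argument on $\Phi \in [0,H]$ using $\EV_{\bar\nu}[\Phi] \ge H\,\texttt{dec}_{\gamma}(\Mset)$ shows $\P_{\bar\nu}[\Phi \ge (\texttt{dec}_{\gamma}(\Mset) - \delta)H] \ge \delta$, and a Le Cam change of measure gives $\P_{i^\dagger}[\Phi \ge (\texttt{dec}_{\gamma}(\Mset) - \delta)H] \ge \delta - \TV(\P^{(H)}_{i^\dagger}, \P^{(H)}_{\bar\nu}) \ge \delta - \sqrt{I}$. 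This closes the argument only when $I$ is small, and the two regimes are reconciled exactly by the $\min(\cdot,\gamma)$ structure. If $I$ is large, then $\EV_{\bar\nu}[\Phi] \ge \gamma I$ together with the cap $\Phi \le H$ forces $\Omega(\gamma)$ regret (informative pulls are costly), so the $\gamma$-branch is active; if $I$ is small, the transported event above yields the $(\texttt{dec}_{\gamma}(\Mset) - \delta)H$ branch. The constraint $\gamma \ge C_1\sqrt{H}$ calibrates the crossover between these regimes --- it is the scale at which the surplus $\gamma I$ and the transport error $\sqrt{I}$ balance against the $[0,H]$ range of $\Phi$ --- and optimizing over $\gamma$ delivers the stated $\max_{\gamma\ge C_1\sqrt{H}}\min((\texttt{dec}_{\gamma}(\Mset)-\delta)H,\gamma)$.

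The main obstacle I anticipate is making the third step rigorous: turning the \emph{expected} reference-world pseudo-regret into a probability-$\delta$ guarantee in the \emph{true} world while simultaneously controlling $I$ so that the transport error $\sqrt{I}$ does not swamp the signal. This is exactly the self-bounding tension between regret and information --- large $I$ helps distinguish the world but costs regret, whereas small $I$ keeps the change of measure tight --- and threading it to land precisely on the $\min((\texttt{dec}_{\gamma}(\Mset)-\delta)H,\gamma)$ form (rather than a weaker expectation bound such as $H\,\texttt{dec}_{\gamma}(\Mset) - O(H^2/\gamma)$) is the delicate part; I would follow the bookkeeping of \citet{chen2022unified} and \citet{foster2021statistical} to discharge it. A secondary technical point is justifying the adaptive Hellinger chain rule with the correct conditioning on the history-dependent action sequence.
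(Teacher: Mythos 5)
There is a genuine gap in your third step, and it is the step that carries the whole theorem. Your plan is to get $\P_{\bar\nu}[\Phi \ge (\dec_\gamma(\Mset)-\delta)H] \ge \delta$ by reverse Markov in the reference world and then move this event to $\P_{i^\dagger}$ by Le Cam, paying an additive $\TV(\P^{(H)}_{i^\dagger},\P^{(H)}_{\bar\nu}) \le \sqrt{I}$. For this to leave anything you need $\sqrt{I} < \delta$, and nothing in the construction forces $I$ to be that small: the only a priori control is $\gamma I \le \EV_{\bar\nu}[\Phi] \le H$, i.e.\ $I \le H/\gamma \le \sqrt{H}/C_1$, so $\sqrt{I}$ can be polynomially large in $H$ while $\delta$ is of order $1/H$. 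Your proposed fallback for the large-$I$ regime --- ``$\EV_{\bar\nu}[\Phi] \ge \gamma I$ forces $\Omega(\gamma)$ regret'' --- does not close the gap, because $\Phi$ there is the pseudo-regret under the synthetic mixture world $\bar\nu$, which is not a member of $\Mset$ (nor is $\P^{(H)}_{\bar\nu}$ the Bayes mixture $\EV_{m\sim\omega}[\P^{(H)}_m]$, since the mixture-arm world redraws $m$ independently each round). A large expected $\Phi$ under $\P_{\bar\nu}$ does not lower-bound $\reg_H(\Mset)=\sup_{\nu_i\in\Mset}(\cdot)$ without transporting back to a real instance, which reruns into the same $\sqrt{I}$-versus-$\delta$ problem. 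No choice of threshold on $I$ reconciles the two regimes: if the threshold is small enough for Le Cam, the fallback yields only $\gamma\cdot o(1)$ regret; if it is large enough for the fallback to give $\Omega(\gamma)$, Le Cam is vacuous.

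The paper's proof (following Section C.1 of Foster et al.) avoids exactly this trap with two structural choices you should adopt. First, the probability-$\delta$ clause and the $\min(\cdot,\gamma)$ come from an explicit dichotomy on the event $\Eps_m=\{\reg_H^m \le c_1\gamma\}$: either some instance has $\P_m(\Eps_m^c)\ge\delta$ (the $\gamma$-branch fires and you are done), or all instances satisfy $\P_m(\Eps_m)\ge 1-\delta$ and one proceeds in expectation. Second, the change of measure between real instances $\P^H_{\bar m}$ and $\P^H_m$ is done with the multiplicative-plus-additive inequality $|\EV_\mu[h]-\EV_\nu[h]| \le 3\EV_\nu[h] + 4B\,D_{\texttt{H}}^2(\mu,\nu)$ (Lemma \ref{lemma:a11_foster}) applied to $h=\reg_H^m\mathbf{1}\{\Eps_m\}$ with $B=c_1\gamma$, plus Lemma \ref{lemma:lc1_foster} for the complement. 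The additive error is then proportional to the accumulated information $D_{\texttt{H}}^2(\P^H_{\bar m},\P^H_m) \lesssim H\,\EV_{k\sim\bar\pi}[D_{\texttt{H}}^2(\nu_{\bar m}(k),\nu_m(k))]$ and is absorbed, term for term, by the $\gamma\cdot(\text{information})$ penalty in $\dec_\gamma$ once $c_1$ is small and $\gamma\gtrsim\sqrt{H}$ --- an absorption that an additive $\sqrt{I}$ total-variation bound cannot provide. Your first two steps (the witness $i^\dagger$, joint convexity of $D_{\texttt{H}}^2$, and the adaptive chain rule, modulo the constant $C_H$ in Lemma \ref{lemma:a13_foster}) are fine, but the proposal as written does not prove the theorem; the ``bookkeeping'' you defer to the references is in fact a structurally different argument.
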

Thus, we must have $\gamma = \tilde{\Omega}(\lambda^{-2} \tilde{C}(\Mset))$ so that we can have $\texttt{dec}_{\gamma} (\Mset) < 3\epsilon$ for all $\gamma$ greater than this threshold. Otherwise, any algorithm must suffer from at least $\tilde{\Omega}(\min(\epsilon H, \lambda^{-2} \tilde{C}(\Mset)))$ regret with probability at least $\delta = 1/H \ll \epsilon$. Furthermore, since $\reg_{H} \ge \reg_{H_0}$ for any $H \ge H_0$, it holds that for all $H \ge H_0 = \lambda^{-4} \tilde{C}(\Mset)^2$, we must suffer $\reg_{H} = \tilde{\Omega}(\lambda^{-2} \tilde{C}(\Mset))$. 

\subsubsection{Proof of Theorem \ref{theorem:lower_bound_chen}}
\newcommand{\Eps}{\mathcal{E}}
The proof follows Section C.1 in \cite{foster2021statistical} with minor modification. Let us define a regret for individual instance:
\begin{align*}
    \reg_H^m := \sum_{t=1}^H \mu_m(k_m^*) - \mu_m(k_t).
\end{align*}
Let $\Eps_m$ an event such that $\{ \reg_{H}^m \le c_1 \gamma\}$ with some sufficiently small constant $c_1$. For any algorithm, $\gamma > 0$ and $\delta = 1/H$ we consider, we assume that for all $m \in [M]$, $\PP_m(\Eps_m) \ge 1 - \delta$ since otherwise the algorithm suffers from at least $\gamma$ regret with probability at least $\delta$.

Let us fix an algorithm $\mathcal{A}$ such that at $t^{th}$ round with previous observations $\mathcal{H}^{t-1} = (o_1,...,o_{t-1})$ where $o_t = (x_t, a_t, r_t)$, and the policy at each round is decided by an algorithm $\pi_t = \mathcal{A}(\cdot |x_t, \mathcal{H}^{(t-1)})$. Let $\mathbb{P}_{m}^{H}$ be the distribution of sequential observations $(o_1,...,o_H)$ for $H$ rounds with bandit $\nu_m$. Following Lemmas are adapted from \cite{foster2021statistical}: 
\begin{lemma}[Lemma A.11 in \citealt{foster2021statistical}]
    \label{lemma:a11_foster}
    For any two distributions $\mu, \nu$ on a measurable space $\mathcal{X}$, and any bounded real-valued function $h: \mathcal{X} \rightarrow \mathbb{R}$ with $0 \le h(X) \le B$, we have
    \begin{align*}
        |\mathbb{E}_{\mu}[h(X)] - \mathbb{E}_{\nu} [h(X)]| \le \sqrt{2B (\mathbb{E}_{\mu}[h(X)] + \mathbb{E}_{\nu} [h(X)]) \cdot D_{\texttt{H}}^2 (\mu, \nu)}. 
    \end{align*}
    In particular,
    \begin{align*}
        |\mathbb{E}_{\mu}[h(X)] - \mathbb{E}_{\nu} [h(X)]| &\le 3 \Exs_\nu[h(X)] + 4B D_{\texttt{H}}^2(\mu, \nu).
    \end{align*}
\end{lemma}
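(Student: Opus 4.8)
The plan is to prove both inequalities from the elementary factorization $d\mu - d\nu = (\sqrt{d\mu}-\sqrt{d\nu})(\sqrt{d\mu}+\sqrt{d\nu})$ combined with Cauchy--Schwarz, which is the standard Hellinger-based change-of-measure argument. Concretely, I would fix a common dominating measure $\rho$ (e.g.\ $\rho=\mu+\nu$), write $p=d\mu/d\rho$ and $q=d\nu/d\rho$ so that $\mathbb{E}_\mu[h]-\mathbb{E}_\nu[h]=\int h\,(p-q)\,d\rho$, and then factor $p-q=(\sqrt p-\sqrt q)(\sqrt p+\sqrt q)$. Since $h\ge 0$ I can split $h=\sqrt h\cdot\sqrt h$ and pair $\sqrt h\,(\sqrt p-\sqrt q)$ with $\sqrt h\,(\sqrt p+\sqrt q)$, to which Cauchy--Schwarz directly applies. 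Here I read $D_{\texttt{H}}^2(\mu,\nu)=\int(\sqrt p-\sqrt q)^2\,d\rho$, matching the quantity in the statement.

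For the first inequality I would bound the two Cauchy--Schwarz factors separately. The bound $h\le B$ gives $\int h(\sqrt p-\sqrt q)^2\,d\rho\le B\int(\sqrt p-\sqrt q)^2\,d\rho=B\,D_{\texttt{H}}^2(\mu,\nu)$, and the pointwise inequality $(\sqrt p+\sqrt q)^2\le 2(p+q)$ (which is just $2\sqrt{pq}\le p+q$) gives $\int h(\sqrt p+\sqrt q)^2\,d\rho\le 2\int h(p+q)\,d\rho=2(\mathbb{E}_\mu[h]+\mathbb{E}_\nu[h])$. Taking the product under the square root yields $|\mathbb{E}_\mu[h]-\mathbb{E}_\nu[h]|\le\sqrt{2B(\mathbb{E}_\mu[h]+\mathbb{E}_\nu[h])\,D_{\texttt{H}}^2(\mu,\nu)}$, which is the first claim.

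The ``in particular'' bound is then a self-bounding rearrangement of the first. Writing $a=\mathbb{E}_\mu[h]$, $b=\mathbb{E}_\nu[h]$, $D=D_{\texttt{H}}^2(\mu,\nu)$, the only nontrivial case is $a\ge b$ (if $a<b$ the claim is immediate, as $|a-b|=b-a\le b$ and every term is nonnegative). For $a\ge b$ I would use $a+b\le 2a$ to get $a-b\le\sqrt{4BaD}=2\sqrt{(BD)\,a}$, and then apply Young's inequality $2\sqrt{xy}\le tx+y/t$ with $x=a$, $y=BD$, and weight $t<1$ chosen so the coefficient of $a$ on the right stays strictly below one (e.g.\ $t=\tfrac12$). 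Collecting the $a$-terms and dividing by $1-t$ bounds $a$, hence $a-b$, by a constant multiple of $b$ plus a multiple of $BD$; with $t=\tfrac12$ one even gets $a-b\le b+4BD$, so $|a-b|\le 3b+4BD$ holds with room to spare.

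The only genuine obstacle --- a mild one --- is this last step: because $\mathbb{E}_\mu[h]$ appears inside the square root on the right of the first inequality, one cannot read off the second bound directly but must perform the self-bounding manipulation and solve the resulting linear inequality in $a$. Everything else is routine once the Cauchy--Schwarz split is set up, and the only measure-theoretic care needed is the choice of a dominating measure, which makes all the integrals and Radon--Nikodym derivatives well defined.
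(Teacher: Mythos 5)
Your proof is correct and follows the standard argument: the factorization $p-q=(\sqrt p-\sqrt q)(\sqrt p+\sqrt q)$ with Cauchy--Schwarz, the bounds $h\le B$ and $(\sqrt p+\sqrt q)^2\le 2(p+q)$, and then a self-bounding/AM--GM step for the ``in particular'' part. The paper itself imports this lemma from \citet{foster2021statistical} without reproving it, and your route is essentially the one used there (under the convention $D_{\texttt{H}}^2(\mu,\nu)=\int(\sqrt p-\sqrt q)^2\,d\rho$, which you correctly identify as the reading that makes the statement hold).
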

\begin{lemma}[Lemma A.13 in \citealt{foster2021statistical}]
\label{lemma:a13_foster}
    For any two bandit instances $\nu_i, \nu_j \in \Mset$,
    \begin{align*}
        D_{\texttt{H}}^2 (\PP_i^H, \PP_j^H) \le C_H \textstyle \sum_{t=1}^H \mathbb{E}_{i}[\mathbb{E}_{k\sim \pi_t} [D_{\texttt{H}}^2 (\nu_i(k), \nu_j(k))]],
    \end{align*}
    where $C_H > 0$ is a sufficiently large absolute constant.
\end{lemma}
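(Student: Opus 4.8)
\textbf{Proof plan for Lemma~\ref{lemma:a13_foster}.}
The plan is to reduce the trajectory-level squared Hellinger distance to a sum of per-round reward-distribution distances by exploiting that $\PP_i^H$ and $\PP_j^H$ are generated by the \emph{same} algorithm on the \emph{same} context stream, so they can only differ through the reward kernel. Writing an observation as $o_t = (x_t, a_t, r_t)$, the one-step conditional law factorizes as $\PP_m(o_t \mid \mathcal{H}^{t-1}) = \Prob(x_t)\,\pi_t(a_t \mid x_t,\mathcal{H}^{t-1})\,\nu_m(r_t \mid x_t,a_t)$, and the first two factors are identical for $m\in\{i,j\}$. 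Hence any $f$-divergence of the conditional laws collapses, by the within-round chain rule, to the divergence of the reward kernels averaged over $x_t\sim\Prob$ and $a_t\sim\pi_t$; in the non-contextual lower-bound instance this average reduces to $\mathbb{E}_{k\sim\pi_t}$ applied to the divergence between $\nu_i(k)$ and $\nu_j(k)$. The whole difficulty is therefore to pass from the \emph{joint} distance over $H$ rounds to the \emph{sum} of these per-round distances without paying a spurious factor of $H$.

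First I would route through the KL divergence, which tensorizes exactly. Using the trajectory-level inequality $D_{\texttt{H}}^2(p,q) = 2\bigl(1-\int\sqrt{pq}\bigr) \le \KL(p\,\|\,q)$ (from $-\log x \ge 1-x$), followed by the exact KL chain rule $\KL(\PP_i^H\,\|\,\PP_j^H) = \sum_{t=1}^H \mathbb{E}_{\mathcal{H}^{t-1}\sim\PP_i}\!\bigl[\KL(\PP_i(o_t\mid\mathcal{H}^{t-1})\,\|\,\PP_j(o_t\mid\mathcal{H}^{t-1}))\bigr]$, the conditional reduction above turns the right-hand side into $\sum_t \mathbb{E}_i\bigl[\mathbb{E}_{k\sim\pi_t}[\KL(\nu_i(k)\,\|\,\nu_j(k))]\bigr]$. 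It then remains to convert each per-round KL back to a squared Hellinger. This is where I use the structure of the \texttt{hard} instance: all reward laws are Bernoulli with means confined to a compact subinterval of $(0,1)$ (the means are $\tfrac34$, $\tfrac34+10\epsilon$, and $\tfrac{1\pm\lambda}{2}$), so the likelihood ratios $\nu_i(k)/\nu_j(k)$ are bounded above and below by absolute constants. Under such a bounded-ratio condition $\KL(\nu_i(k)\,\|\,\nu_j(k)) \le C\,D_{\texttt{H}}^2(\nu_i(k),\nu_j(k))$ for an absolute $C$, which yields the claim with $C_H = C$.

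The main obstacle is precisely this last equivalence: $\KL \le C\,D_{\texttt{H}}^2$ is false for general reward distributions, so the KL route is clean only because our rewards stay away from the boundary. To obtain the bound for arbitrary $\Mset$ (matching the cited general form) I would instead argue directly with the Bhattacharyya affinity $\rho(p,q) = \int\sqrt{pq}$, which tensorizes multiplicatively. The idea is to track the unit-expectation process $Z_t = \prod_{s\le t}\sqrt{\PP_j(o_s\mid\mathcal{H}^{s-1})/\PP_i(o_s\mid\mathcal{H}^{s-1})}$ under $\PP_i$, for which $\mathbb{E}_i[\sqrt{\cdot}\mid\mathcal{H}^{t-1}]$ equals the conditional affinity $\rho_t$, and to telescope $1-\rho(\PP_i^H,\PP_j^H) = 1-\mathbb{E}_i[Z_H]$ into $\sum_t \mathbb{E}_i[Z_{t-1}(1-\rho_t)]$. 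The delicate point there -- and the reason a naive Hellinger triangle inequality plus Cauchy--Schwarz over $H$ hybrid measures fails -- is controlling the martingale weights $Z_{t-1}$ so that the cross terms do not accumulate an extra $O(H)$ factor; bounding $\mathbb{E}_i[Z_{t-1}(1-\rho_t)] \lesssim \mathbb{E}_i[1-\rho_t]$ and recognizing $1-\rho_t$ as the conditional $\tfrac12 D_{\texttt{H}}^2$ closes the argument and recovers the absolute constant $C_H$.
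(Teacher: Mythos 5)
Your first route is correct and arrives at the same conclusion as the paper, but by a genuinely different argument. The paper's proof is a black-box citation: it invokes the general subadditivity lemma for the squared Hellinger distance of adaptively generated sequences from \citet{foster2021statistical} (which in general costs a $\log(H)$ factor), and then observes that the \texttt{hard} instance satisfies the bounded-likelihood-ratio condition --- every Bernoulli reward has success and failure probability at least $\tfrac{1-\lambda}{2}\ge \tfrac14$ --- so the sharper variant with constant $3\log(V)$ applies and $C_H$ is absolute. Your KL route is a self-contained substitute: $D_{\texttt{H}}^2 \le \KL$, the exact chain rule $\KL(\PP_i^H\,\|\,\PP_j^H)=\sum_t \mathbb{E}_{i}[\KL(\PP_i(o_t\mid\mathcal{H}^{t-1})\,\|\,\PP_j(o_t\mid\mathcal{H}^{t-1}))]$, the collapse of each conditional term onto the reward kernels (correct, since context and action kernels are shared), and finally the local reverse inequality $\KL \le C\, D_{\texttt{H}}^2$ under bounded likelihood ratios. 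This uses exactly the same structural hypothesis as the paper (rewards bounded away from $\{0,1\}$) but replaces the cited lemma with elementary steps; what it buys is transparency, at the cost of a slightly worse constant.

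One caveat on your second route: the claim that the Bhattacharyya-affinity telescoping recovers an \emph{absolute} constant for \emph{arbitrary} $\Mset$ is not substantiated, and in fact cannot be in full generality --- this is precisely why the general form of the subadditivity lemma carries a $\log(H)$ factor. The step $\mathbb{E}_i[Z_{t-1}(1-\rho_t)]\lesssim \mathbb{E}_i[1-\rho_t]$ requires a uniform bound on the weights $Z_{t-1}$, which is again the bounded-likelihood-ratio condition in disguise; without it you only get the $\log$-penalized bound. Since the lemma is only applied to the \texttt{hard} instance, where that condition holds, your first route already suffices and the second adds nothing except an overreaching generality claim that you should drop or qualify.
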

Given the lemmas, for any $\omega \in \Delta([M])$ and for any algorithm that generates an adaptive policy $\pi_t$, let $\hat{\pi} := \frac{1}{H} \sum_{t=1}^H \pi(\cdot| \mathcal{H}^{(t-1)})$ (note that this is a random variable), and let $\bar{\pi} := \Exs_{m \sim \omega} [\hat{\pi}]$. 
\begin{lemma}[Minor Edit of Lemma C.1 in \citealt{foster2021statistical}]
    \label{lemma:lc1_foster}
    For any two bandit instances $\nu_i, \nu_j \in \Mset$,
    \begin{align*}
        \frac{1}{H} \mathbb{E}_{j} [\reg_H^i \cdot \mathbf{1}\{\Eps_i^c\}] \lesssim \frac{c_1 \gamma}{H} \cdot D_{\texttt{H}}^2 (\PP_i^H, \PP_j^H) + \sqrt{D_{\texttt{H}}^2 (\PP_i^H, \PP_j^H) \mathbb{E}_{i}[\mathbb{E}_{k\sim\hat{\pi}} [D_{\texttt{H}}^2(\nu_i(k), \nu_j(k))] ]} + \delta. 
    \end{align*}
\end{lemma}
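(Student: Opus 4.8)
The plan is to bound the left-hand side by transporting it from the law $\PP_j^H$ of the $H$-step observations under $\nu_j$ to the law $\PP_i^H$ under $\nu_i$, where the event $\mathcal{E}_i^c = \{\reg_H^i > c_1\gamma\}$ carries probability at most $\delta$ by the standing assumption $\PP_i(\mathcal{E}_i)\ge 1-\delta$. Write $D := D_{\texttt{H}}^2(\PP_i^H,\PP_j^H)$ for the trajectory Hellinger divergence. The starting point is the exact pointwise identity $\reg_H^i\,\mathbf{1}\{\mathcal{E}_i^c\} = c_1\gamma\,\mathbf{1}\{\mathcal{E}_i^c\} + (\reg_H^i - c_1\gamma)_+$, valid because $\mathcal{E}_i^c = \{\reg_H^i > c_1\gamma\}$. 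This splits the quantity into a \emph{bounded} part of range $c_1\gamma$ and an \emph{unbounded tail} supported on $\mathcal{E}_i^c$. I would bound the two parts with the two forms of Lemma~\ref{lemma:a11_foster}, respectively, and then invoke the chain rule of Lemma~\ref{lemma:a13_foster} to convert $D$ into the per-step divergence appearing in the statement. Throughout I may assume $c_1\gamma \le H$, since otherwise $\mathcal{E}_i^c = \emptyset$ and the bound is trivial.

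For the bounded part, apply the linear (second) form of Lemma~\ref{lemma:a11_foster} to $h = \mathbf{1}\{\mathcal{E}_i^c\}$ with $B=1$, $\mu = \PP_j^H$, $\nu = \PP_i^H$, giving $\PP_j(\mathcal{E}_i^c) \le 4\,\PP_i(\mathcal{E}_i^c) + 4D \le 4\delta + 4D$ (using symmetry of the Hellinger distance). Multiplying by $c_1\gamma$ and dividing by $H$ yields $\tfrac{1}{H}\,c_1\gamma\,\PP_j(\mathcal{E}_i^c) \lesssim \tfrac{c_1\gamma}{H}D + \delta$, where the stray $\tfrac{c_1\gamma}{H}\delta$ is absorbed into $\delta$ via $c_1\gamma \le H$. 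This produces exactly the first term and part of the $\delta$ term of the claim.

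For the tail, set $g := (\reg_H^i - c_1\gamma)_+$ and note $0 \le g \le \reg_H^i \le H$ with $g$ supported on $\mathcal{E}_i^c$, so $\mathbb{E}_i[g] \le H\,\PP_i(\mathcal{E}_i^c) \le H\delta$. Applying the square-root (first) form of Lemma~\ref{lemma:a11_foster} with $B = H$ gives $\mathbb{E}_j[g] \le \mathbb{E}_i[g] + \sqrt{2H(\mathbb{E}_j[g] + \mathbb{E}_i[g])\,D}$. The main obstacle is that $\mathbb{E}_j[g]$ appears on both sides, inside the square root; I would decouple it by a short case analysis---either $\mathbb{E}_j[g] \le 2\,\mathbb{E}_i[g]$, yielding an $O(\delta)$ bound, or $\mathbb{E}_j[g] > 2\,\mathbb{E}_i[g]$, in which case the inequality reduces to a quadratic in $\sqrt{\mathbb{E}_j[g]}$ whose solution gives $\tfrac{1}{H}\mathbb{E}_j[g] \lesssim \delta + D/H$. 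Finally, Lemma~\ref{lemma:a13_foster} gives $D \le C_H H\,\mathbb{E}_i[\mathbb{E}_{k\sim\hat\pi}[D_{\texttt{H}}^2(\nu_i(k),\nu_j(k))]]$, whence $D/H \le \sqrt{D\,\mathbb{E}_i[\mathbb{E}_{k\sim\hat\pi}[D_{\texttt{H}}^2(\nu_i(k),\nu_j(k))]]}$ for $H \ge C_H$, recovering the middle term. Summing the two parts establishes the lemma; the only delicate points are the self-referential square-root inequality above and ensuring that the clipping level $c_1\gamma$---rather than the crude range $H$---is what multiplies $D$ in the leading term.
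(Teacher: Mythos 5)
Your bounded piece is handled correctly and does yield the $\tfrac{c_1\gamma}{H}D + \delta$ contribution (writing $D := D_{\texttt{H}}^2(\PP_i^H,\PP_j^H)$), but the tail piece contains a genuine gap. Solving your self-referential inequality $\mathbb{E}_j[g]\le \mathbb{E}_i[g]+\sqrt{2H(\mathbb{E}_j[g]+\mathbb{E}_i[g])D}$ in the case $\mathbb{E}_j[g]>2\mathbb{E}_i[g]$ gives $\tfrac{1}{2}\mathbb{E}_j[g]\le\sqrt{3H\,\mathbb{E}_j[g]\,D}$, hence $\mathbb{E}_j[g]\le 12HD$, i.e. $\tfrac{1}{H}\mathbb{E}_j[g]\lesssim \delta + D$ --- not $\delta+D/H$ as you wrote. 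This is not a typo you can absorb: Lemma~\ref{lemma:a13_foster} only gives $D\le C_H H E$ with $E:=\mathbb{E}_i[\mathbb{E}_{k\sim\hat\pi}[D_{\texttt{H}}^2(\nu_i(k),\nu_j(k))]]$, so the best conversion available is $D\le\sqrt{C_H H}\cdot\sqrt{DE}$, which overshoots the claimed middle term $\sqrt{DE}$ by a factor of $\sqrt{H}$. The factor matters downstream: in the proof of Theorem~\ref{theorem:lower_bound_chen} the term $H\sqrt{DE}\le \sqrt{C_H}\,H^{3/2}E$ is absorbed into $\gamma H E$ precisely because $\gamma\gtrsim\sqrt{H}$; a bound of order $HD\le C_H H^2 E$ would require $\gamma\gtrsim H$ and void the lower bound. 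So your argument proves a strictly weaker statement than the lemma.

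The reason you cannot close the gap is structural: a change of measure applied to the scalar $\reg_H^i\,\mathbf{1}\{\mathcal{E}_i^c\}$ only ever sees the trajectory-level divergence $D$, whereas the middle term of the lemma couples $D$ with the \emph{per-arm} divergence $E$. The paper does not attempt your route; its proof consists of the single observation that in the \texttt{hard} construction all instances share the same optimal value, $\mu(k^*_\mu)=\nu(k^*_\nu)$, and then defers the rest to Section C.1.2 of Foster et al.\ (2021). That observation is exactly what lets one compare regrets pointwise, $\Delta_i(k)=\Delta_j(k)+\mu_j(k)-\mu_i(k)\le\Delta_j(k)+\sqrt{2}\,D_{\texttt{H}}(\nu_i(k),\nu_j(k))$, which is how the per-arm Hellinger distances --- and hence the product form $\sqrt{D\cdot E}$ --- enter the bound. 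Your proposal never uses this structure (nor any property of the \texttt{hard} instance), so even with the algebra corrected it cannot produce the stated inequality; to repair it you would need to introduce the pointwise regret comparison and then apply Lemma~\ref{lemma:a11_foster} to the resulting per-arm divergence sum, following Foster et al.
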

We start with the following inequality for a prior $\omega$ such that:
\begin{align*}
    \sup_{m \in [M]} \Exs_{a \sim \bar{\pi}} [\nu_m(a_m^*) - \nu_m(a)] - \gamma \cdot \Exs_{\bar{m} \sim \omega}[\Exs_{a \sim \bar{\pi}}[D_H^2(\nu_{\bar{m}}(a), \nu_m(a))]] \ge \texttt{dec}_{\gamma}(\Mset).
\end{align*}
Such a prior $\omega \in \Delta([M])$ must exist due to the definition of $\texttt{dec}_{\gamma}$. Note that
\begin{align*}
    H \cdot \Exs_{a \sim \bar{\pi}} [\nu_m(a_m^*) - \nu_m(a)] &= \Exs_{\bar{m} \sim \omega} \Exs_{a \sim \hat{\pi}} [\nu_m(a_m^*) - \nu_m(a)] = H \cdot \Exs_{\bar{m} \sim \omega} [\reg_H^m] \\
    &= \sum_{\bar{m}} \omega_{\bar{m}} \Exs_{\bar{m}} [\reg_H^m] = \sum_{\bar{m}} \omega_{\bar{m}} \left( \underbrace{\Exs_{\bar{m}} [\reg_H^m \cdot \mathbf{1}\{\Eps_m\}]}_{I} + \underbrace{\Exs_{\bar{m}} [\reg_H^m \cdot \mathbf{1}\{\Eps_m^c\}]}_{II} \right).
\end{align*}
For $I$, we apply Lemma \ref{lemma:a11_foster} to get
\begin{align*}
    I &\le 3 \Exs_m [\reg_H^m \cdot \mathbf{1}\{\Eps_m\}] + 4\gamma D_{\texttt{H}}^2 (\PP_{\bar{m}}^H, \PP_m^H) \le 3 \Exs_m [\reg_H^m] + 4\gamma D_{\texttt{H}}^2 (\PP_{\bar{m}}^H, \PP_m^H). 
\end{align*}
For $II$, we apply Lemma \ref{lemma:lc1_foster} to get
\begin{align*}
    II &\lesssim (H\epsilon + c_1\gamma)D_{\texttt{H}}^2 (\PP_{\bar{m}}^H, \PP_{m}^H) + H \sqrt{D_{\texttt{H}}^2 (\PP_{\bar{m}}^H, \PP_{m}^H) \cdot \Exs_{\bar{m}}[\Exs_{k \sim \hat{\pi}}[D_{\texttt{H}}^2(\nu_{\bar{m}}(k), \nu_m(k)) ] ] } + H \delta. 
\end{align*}
Combining these inequalities, we have
\begin{align*}
    \Exs_m[\reg_H^m] &\gtrsim H\cdot \dec_{\gamma}(\Mset) - \sum_{\bar{m}} \omega_{\bar{m}} \cdot \left( c_1 \gamma D_{\texttt{H}}^2 (\PP_{\bar{m}}^H, \PP_m^H) + H \sqrt{D_{\texttt{H}}^2 (\PP_{\bar{m}}^H, \PP_{m}^H) \cdot \Exs_{\bar{m}}[\Exs_{a \sim \hat{\pi}}[D_{\texttt{H}}^2(\nu_{\bar{m}}(a), \nu_m(a)) ] ] } \right) \\
    &\quad + \gamma H \cdot \Exs_{\bar{m} \sim \omega}[\Exs_{a \sim \bar{\pi}}[D_{\texttt{H}}^2(\nu_{\bar{m}}(a), \nu_m(a))]] - H \delta.
\end{align*}
On the other hand, we can apply Lemma \ref{lemma:a13_foster} to bound that
\begin{align*}
    D_{\texttt{H}}^2 (\PP_{\bar{m}}^H, \PP_m^H) &\le C_H \sum_{t=1}^H \Exs_{\bar{m}} [\Exs_{k \sim \pi_t} [D_{\texttt{H}}^2 (\nu_{\bar{m}}(k), \nu_m (k)) ]] \\
    &= C_H H \cdot \Exs_{\bar{m}} [\Exs_{a \sim \hat{\pi}} [D_{\texttt{H}}^2 (\nu_{\bar{m}}(a), \nu_m (a)) ]] = C_H H \cdot \Exs_{a \sim \bar{\pi}} [D_{\texttt{H}}^2 (\nu_{\bar{m}}(a), \nu_m (a)) ].
\end{align*}
Plugging these results, we have
\begin{align*}
    \Exs_m[\reg_H^m] &\gtrsim H\cdot \dec_{\gamma}(\Mset) - H (c_1\gamma + \sqrt{H}) \cdot \sum_{\bar{m}} \omega_{\bar{m}} \Exs_{\bar{\pi}}[D_{\texttt{H}}^2(\nu_{\bar{m}}(k), \nu_m(k)) ] \\
    &\quad + \gamma H \cdot \Exs_{\bar{m} \sim \omega}[\Exs_{k \sim \bar{\pi}}[D_{\texttt{H}}^2(\nu_{\bar{m}}(k), \nu_m(k))]] - H \delta. 
\end{align*}
Note that 
\begin{align*}
    \Exs_{\bar{m} \sim \omega}[\Exs_{a \sim \bar{\pi}}[D_{\texttt{H}}^2(\nu_{\bar{m}}(k), \nu_m(k))]] = \sum_{\bar{m}} \omega_{\bar{m}} \Exs_{\bar{\pi}}[D_{\texttt{H}}^2(\nu_{\bar{m}}(k), \nu_m(k)) ].
\end{align*}
This implies that as long as $c_1$ is a sufficiently small constant and $\gamma \gtrsim \sqrt{H}$,  the expected lower bound is given by
\begin{align*}
    \Exs_m[\reg_H^m] \gtrsim H \left(\dec_\gamma (\Mset) - \delta\right).
\end{align*}

\begin{proof}[Proof of Lemma \ref{lemma:a13_foster}]
The general version of subadditivity lemma in \cite{foster2021statistical} is stated as the following:
\begin{lemma}
    Let $(\mathcal{X}_1, \mathcal{F}_1), ..., (\mathcal{X}_n, \mathcal{F}_n)$ be a sequence of measurable spaces, and let $\mathcal{X}^{(i)} = \Pi_{t=1}^i \mathcal{X}_t$ and $\mathcal{F}^{(i)} = \bigotimes_{t=1}^i \mathcal{F}_t$. For each $i$, let $\mu^{(i)}, \nu^{(i)}$ be probability kernels from $(\mathcal{X}^{(i-1)}, \mathcal{F}^{(i-1)})$ to $(\mathcal{X}^{(i)}, \mathcal{F}^{(i)})$. Let $\mu, \nu$ be the laws of sequence $X_1, ..., X_n$ following the sequence of $(\mu^{(1)}, ..., \mu^{(n)})$, $(\nu^{(1)}, ..., \nu^{(n)})$ respectively. Then it holds that
    \begin{align*}
        D_{\texttt{H}}(\mu,\nu) \le 10^2 \log(n) \cdot \Exs_{\mu}[\textstyle \sum_{i=1}^n D_{\texttt{H}}^2 (\mu^{(i)} (\cdot | X_{1},...,X_{i-1}), \nu^{(i)} (\cdot | X_{1},...,X_{i-1})) ]. 
    \end{align*}
    Furthermore, if there exists a constant $V$ such that $\sup_{(x_1,...,x_{i-1}) \in \mathcal{X}^{(i-1)}}\sup_{o_i \in \mathcal{F}_i} \frac{\mu^{(i)} (o_i | x_{1},...,x_{i-1}) }{\nu^{(i)} (o_i | x_{1},...,x_{i-1}) }$ for all $i$, then
    \begin{align*}
        D_{\texttt{H}}(\mu,\nu) \le 3\log(V) \cdot \Exs_{\mu}[\textstyle \sum_{i=1}^n D_{\texttt{H}}^2 (\mu^{(i)} (\cdot | X_{1},...,X_{i-1}), \nu^{(i)} (\cdot | X_{1},...,X_{i-1})) ]. 
    \end{align*}
\end{lemma}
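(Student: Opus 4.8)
The statement is a tensorization (subadditivity) bound for the squared Hellinger distance of sequentially generated data. Throughout I write $D_{\texttt{H}}^2$ for the squared Hellinger distance and $\rho(\mu,\nu):=\int\sqrt{d\mu\,d\nu}$ for the Hellinger affinity, so that $D_{\texttt{H}}^2(\mu,\nu)=2(1-\rho(\mu,\nu))$. My plan rests on three ingredients. (i) The \emph{exact} chain rule of the KL divergence, $\KL(\mu\|\nu)=\sum_{i=1}^n\Exs_\mu[\KL(\mu^{(i)}(\cdot|X_{<i})\,\|\,\nu^{(i)}(\cdot|X_{<i}))]$, which holds verbatim for sequential kernels. (ii) The elementary comparison $D_{\texttt{H}}^2(P,Q)\le\KL(P\|Q)$. (iii) A \emph{local reverse} comparison under a bounded likelihood ratio, namely $\KL(P\|Q)\le c\,(1+\log V)\,D_{\texttt{H}}^2(P,Q)$ whenever $dP/dQ\le V$ pointwise, for an absolute constant $c$. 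To see (iii), write $r=dP/dQ$; the pointwise inequality $r\log r-(r-1)\le c\,(1+\log V)(\sqrt r-1)^2$ on $[0,V]$ (the linear term is what forces the subtraction), combined with $\Exs_Q[r-1]=0$, turns $\KL(P\|Q)=\Exs_Q[r\log r]=\Exs_Q[r\log r-(r-1)]$ into the claimed Hellinger bound.

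Given these, the \emph{bounded-ratio case} (the second inequality, with factor $\log V$) is immediate: by (ii) and the KL chain rule, $D_{\texttt{H}}^2(\mu,\nu)\le\KL(\mu\|\nu)=\sum_i\Exs_\mu[\KL(\mu^{(i)}\|\nu^{(i)})]$, and since each per-step conditional ratio is bounded by $V$ we apply (iii) inside the sum to obtain $\sum_i\Exs_\mu[\KL(\mu^{(i)}\|\nu^{(i)})]\le c\,(1+\log V)\sum_i\Exs_\mu[D_{\texttt{H}}^2(\mu^{(i)},\nu^{(i)})]$, which is the claim up to the absolute constant.

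For the \emph{general case} (the first inequality, with factor $\log n$) I would remove the bounded-ratio hypothesis by smoothing. Fix $V\ge1$ and replace each kernel $\nu^{(i)}$ by the mixture $\tilde\nu^{(i)}_{x_{<i}}:=(1-\tfrac1V)\nu^{(i)}_{x_{<i}}+\tfrac1V\,\mu^{(i)}_{x_{<i}}$, with $\tilde\nu$ the induced path law. Then $d\mu^{(i)}/d\tilde\nu^{(i)}\le V$ pointwise, so the bounded-ratio bound applies to $(\mu,\tilde\nu)$; by convexity of $D_{\texttt{H}}^2$ in its second argument the per-step terms only shrink, $D_{\texttt{H}}^2(\mu^{(i)},\tilde\nu^{(i)})\le(1-\tfrac1V)D_{\texttt{H}}^2(\mu^{(i)},\nu^{(i)})\le D_{\texttt{H}}^2(\mu^{(i)},\nu^{(i)})$; and the global affinities are close, since $\sqrt{d\tilde\nu}\le\sqrt{1-1/V}\sqrt{d\nu}+\sqrt{1/V}\sqrt{d\mu}$ gives $\rho(\mu,\tilde\nu)\le\rho(\mu,\nu)+1/\sqrt V$, hence $D_{\texttt{H}}^2(\mu,\nu)\le D_{\texttt{H}}^2(\mu,\tilde\nu)+2/\sqrt V$. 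Combining, $D_{\texttt{H}}^2(\mu,\nu)\le c\,(1+\log V)\sum_i\Exs_\mu[D_{\texttt{H}}^2(\mu^{(i)},\nu^{(i)})]+2/\sqrt V$ for every $V$.

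The main obstacle is converting this into a clean \emph{multiplicative} bound with the factor $\log n$. The additive truncation error $2/\sqrt V$ is harmless only when the total local divergence $S:=\sum_i\Exs_\mu[D_{\texttt{H}}^2(\mu^{(i)},\nu^{(i)})]$ is not too small; optimizing $V$ naively yields a factor $\log(1/S)$ rather than $\log n$, which is the crux of the difficulty. I expect the fix to require a sharper treatment of the rare large-likelihood-ratio events: rather than a single additive smoothing error, one controls the non-negative supermartingale $Z_k:=\prod_{i\le k}\sqrt{d\nu^{(i)}/d\mu^{(i)}}$ (which satisfies $\Exs_\mu[Z_k|X_{<k}]=Z_{k-1}\rho_k(X_{<k})\le Z_{k-1}$ and $\Exs_\mu[Z_n]=\rho(\mu,\nu)$) on the event that all per-step ratios stay below $V\sim\poly(n)$, and absorbs the complementary event by a union bound across the $n$ steps; it is precisely this step-wise union bound that produces the $\log n$ factor. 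Carrying this out rigorously while keeping all constants absolute is the one genuinely delicate part of the argument, the rest being bookkeeping.
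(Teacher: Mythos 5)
The paper itself does not prove this statement: it is imported verbatim from \citet{foster2021statistical} (their subadditivity lemma for the squared Hellinger distance), and the surrounding text only verifies that the paper's bandit construction falls into the bounded-likelihood-ratio case. So the comparison is really between your argument and the cited one. Your proof of the \emph{second} inequality (the $\log V$ case) is correct and complete up to absolute constants: the exact KL chain rule for sequential kernels, the comparison $D_{\texttt{H}}^2(P,Q)\le \KL(P\,\|\,Q)$, and the reverse comparison $\KL(P\,\|\,Q)\le c(1+\log V)\,D_{\texttt{H}}^2(P,Q)$ under $dP/dQ\le V$ --- your pointwise inequality $r\log r-(r-1)\le c(1+\log V)(\sqrt r-1)^2$ on $[0,V]$ combined with $\Exs_Q[r-1]=0$ and $\Exs_Q[(\sqrt r-1)^2]=D_{\texttt{H}}^2(P,Q)$ is a valid derivation. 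Notably, this is the only half of the lemma the paper ever invokes (the Bernoulli rewards in the hard instance have means bounded away from $0$ and $1$, so $V=O(1)$), so your argument suffices for every downstream use.

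The \emph{first} inequality (the $\log n$ case) is where a genuine gap sits, and you have correctly located it yourself. The mixture smoothing $\tilde\nu^{(i)}=(1-\tfrac1V)\nu^{(i)}+\tfrac1V\mu^{(i)}$ yields, for every $V$, the bound $D_{\texttt{H}}^2(\mu,\nu)\le c(1+\log V)S+2/\sqrt V$ with $S:=\sum_i\Exs_\mu[D_{\texttt{H}}^2(\mu^{(i)},\nu^{(i)})]$, but no choice of $V$ converts this into the multiplicative bound $O(\log n)\cdot S$ uniformly in $S$: when $S$ is super-polynomially small in $n$, the additive truncation error dominates, and optimizing $V$ produces a $\log(1/S)$ factor that is unbounded. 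The supermartingale-plus-union-bound repair you sketch (controlling $Z_k=\prod_{i\le k}\sqrt{d\nu^{(i)}/d\mu^{(i)}}$ on the event that all conditional likelihood ratios stay below $\poly(n)$) is plausible in outline, but it is precisely the nontrivial content of the cited lemma: one must show that the contribution of the complementary event to the Hellinger affinity is itself dominated by a constant multiple of $S$, and that step is neither routine nor carried out. As written, the first display is asserted rather than proved; if you only need what the paper needs, you can restrict to the bounded-ratio statement, which your argument does establish.
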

Our construction belongs to the latter case, since the probability of observing $r_t=1$ or $r_t=0$ is larger than $\frac{1-\lambda}{2}\ge 1/4$ for any $\lambda \le 1/2$. 
\end{proof}

\begin{proof}[Proof of Lemma \ref{lemma:lc1_foster}]
    In our construction, for all pair of bandit instances $\mu, \nu \in \Mset$, the optimal values are the same, that is, 
    \begin{align*}
        \mu(k^*_{\mu}) - \nu(k^*_{\nu}) = 0,
    \end{align*}
    where $k^*_{\mu}$, $k^*_{\nu}$ are the optimal actions for $\mu, \nu$ respectively. The remaining steps are identical to the proof in \cite{foster2021statistical} (see their Section C.1.2), and we omit them here. 
\end{proof}

\subsection{Proofs of Section~\ref{sec:ECE_implementation}}
\label{apx:proofs_implementation}

\subsubsection{Proof of Lemma~\ref{thr:estimation_lemma}}
\begin{proof}
    We can rework the result~\citep[][Theorem 1]{hsu2011analysis}, originally designed for the excess quadratic loss, to write
    \begin{equation*}
        \P \left( \EVP \left[ |x^\top \hat\theta_{ik} - x^\top \theta_{ik}| \right] > \sqrt{\frac{5 \sigma^2 (d + 2 \sqrt{d \log (2 / \delta)} + 2 \log (2 / \delta)) }{N}} \right) \leq \delta
    \end{equation*}
    where $\hat \theta$ is the ordinary least squares with $N$ samples. Then, we just plug $\delta = \frac{1}{2HMK}$ in the expression to obtain the guarantee with a few algebraic manipulations.
\end{proof}

\subsubsection{Proof of Theorem~\ref{thr:complexity}}

Let us start looking at the sample complexity. Since the Algorithm~\ref{alg:meta_training} takes $N_{\est}$ samples for every arm $k \in [K]$ and simulator $\nu_i \in \Mset$, we can conclude that the statistical complexity of meta training is $\frac{4MK\log(4HMK)}{\min(\Delta_{\min}^2, \lambda^2)}$.

Assuming access to parallel simulators, the computational cost of meta training depends on the cost of executing line 12 in Algorithm~\ref{alg:meta_training}, which is calling Algorithm~\ref{alg:decision_tree}. The latter requires executing $|S|$ evaluations at lines 5, 6, where $|S| \leq M$, and to compute the greedy step (line 3), a cost that is paid for every call to the recursive procedure (line 8). Computing the greedy step through Algorithm~\ref{alg:greedy} is done in $4K/\lambda^4$ steps. Finally, we can bound the number of calls to the recursive procedure with the total number of nodes in the tree, which is $\cO(M^2)$. Putting all together we get a complexity of order $\cO (M^3 K / \lambda^4)$.

\subsubsection{Proof of Lemma~\ref{thr:tree_approximation}}
\begin{proof}
    The result follows directly from the approximation guarantee of the greedy algorithm to build the decision tree~\citep{arkin1993decision}, which guarantees $d = \cO (\log M + 1) C^*_{\lambda}(\Mset)$. Especially, we have to prove that the previous guarantee does not degrade with our implementation, which include a $\lambda/4$-discretization of the space of tests (see Algorithm~\ref{alg:greedy}, line 4). Thanks to the separation condition (Assumption~\ref{ass:bandits_separation}), we can prove that every test $\hat\mu (k) \leq b$ with $b \in [0, 1]$ can be replicated with \emph{at most} two tests defined on the discretized space, i.e.,  $\hat\mu (k) \leq b$ with $b \in [0, 1]_{\lambda / 4}$. Since the approximation degrades of a constant factor only, the result $\depth = \cO (\log M + 1) C^*_{\lambda}(\Mset)$ holds.
\end{proof}

\subsubsection{Proof of Theorem~\ref{thr:DT_ECE_regret}}
\begin{proof}
    To derive the upper bound on the regret, we aim to prove that the remaining task $\hat\nu_{m^*}$ at the end of the \emph{Explicit Classify} phase corresponds, up to a small estimation error, to the true test task $\nu^*$ with high probability, and that the policy $\pi^*$ played from there on in the \emph{Exploit} phase corresponds to the optimal policy for the test task $\nu^*$ with high probability (despite the mentioned estimation error).

    If we let $\pi^* (x) = \argmax_{\pi \in \Pi} x^\top \theta^*_{\pi (x)}$ the optimal policy of the (true) test task, we aim to prove
    \begin{equation*}
        \mathbb{P}_{\Prob} ( \hat{\pi}^* (x) \neq \pi^* (x)) = \mathbb{P} ( \text{``\emph{Explicit Classify} fails''} \vee \text{``\emph{Exploit} fails''} ) \leq 1 / H
    \end{equation*}
    which we can guarantee by showing that the \emph{Explicit Classify} and \emph{Exploit} phases fail with probability less than $1 / 2H$ and then applying a union bound.
    
    Let us first take the good event for the \emph{Explicit Classify} phase, which means the remaining $\hat\nu_{m^*}$ is a ``good''
    estimate of the test task $\nu^*$. We have that
    \begin{align}
        \mathbb{P} (\text{``\emph{Exploit} fails''}) &= \mathbb{P}_{\Prob} \bigg( \hat\pi^* (x) \neq \pi^* (x)  \bigg) \label{eq:c_reg_1} \\
        &\leq \mathbb{P}_{\Prob} \bigg( \bigcup_{i \in [M]} \bigcup_{k \in [K]}  x^\top \hat \theta_{i \pi^* (x)} \leq x^\top \hat \theta_{ik} \bigg) \label{eq:c_reg_2} \\
        &\leq \sum_{i \in [M]} \sum_{k \in [K]} \mathbb{P}_{\Prob} \bigg( x^\top \hat \theta_{i \pi^* (x)} \leq x^\top \hat \theta_{ik} \bigg) \leq \sum_{i \in [M]} \sum_{k \in [K]} \frac{1}{2HMK} \leq \frac{1}{2H} \label{eq:c_reg_3}
    \end{align}
    where we consider any possible choice of the remaining task $\hat\nu_{m^*}$ and the test task $\nu^*$ to write \eqref{eq:c_reg_2} from \eqref{eq:c_reg_1}, we apply a union bound and the estimation guarantee of Algorithm~\ref{alg:meta_training} (see Lemma~\ref{thr:estimation_lemma}) to write~\eqref{eq:c_reg_3}.
    
    Conversely, under the good event for the \emph{Exploit} phase we aim to prove that the \emph{Explicit Classify} phase fails with probability less than $1/2H$. Since the \emph{Explicit Classify} phase is actually a sequence of tests, we need to bound the probability that each test fails. Formally, let $J$ denote the number of iterations of the loop between lines 3-11 (Algorithm~\ref{alg:DT_ECE}), through a union bound we have
    \begin{equation*}
        \mathbb{P} (\text{``\emph{Explicit Classify} fails''})
        = \mathbb{P} \bigg( \bigcup_{j \in [J]} \text{``test at iteration $j$ fails''}\bigg)
        \leq \sum_{j \in [J]} \mathbb{P} ( \text{``test at iteration $j$ fails''} )
    \end{equation*}
    Now, we need to design $N_{\cls}$ such that the test at each iteration fails with probability less than $\frac{1}{2HJ} \geq \frac{1}{2H \depth}$ where $\depth$ is the depth of $\texttt{tree} (\hat\Mset)$. For each iteration $j$, take the test $\mu_k \leq b$ and let $\overline{\mu} = \frac{1}{N_{\cls}} \sum_{n \in [N_{\cls}]} r_n$ the empirical mean of the samples $r_n \sim \nu^* (x_n, k)$ collected from the test task at line 5 (Algorithm~\ref{alg:DT_ECE}). We need to assure that the event of $\overline\mu$ falling on one side of the test while the ``right'' $\tilde\mu_k$ is on the other side (see lines 6-11 of Algorithm~\ref{alg:DT_ECE}) happens with small enough probability. Formally,
    \begin{align*}
        \mathbb{P} ( \text{``test at iteration $j$ fails''} ) &= \mathbb{P} (\{ \overline\mu \leq b \wedge \tilde\mu_k > b + \lambda \} \cup \{ \overline\mu > b \wedge \tilde\mu_k \leq b - \lambda \} ) \\
        &\leq \mathbb{P} (|\overline\mu - \tilde\mu_k| > \lambda) \\
        &\leq \mathbb{P} (|\overline\mu - \mu_k| > \lambda / 2) + \mathbb{P} (|\tilde\mu_k - \mu_k| > \lambda / 2)
    \end{align*}
    For the second event, we invoke the estimation guarantee of Algorithm~\ref{alg:meta_training} (see Lemma~\ref{thr:estimation_lemma}) to write $\mathbb{P} (|\tilde\mu_k - \mu_k| > \lambda / 2) \leq \frac{1}{2HMK} \leq \frac{1}{4H \depth}$. For the first event, we need to assure that $\mathbb{P} (|\overline\mu - \mu_k | > \lambda / 2) \leq \frac{1}{4H \depth}$. Since $\overline\mu$ is the empirical mean of $\mu_k$, by applying the Hoeffding's inequality, we have that $N_{\cls} \geq \frac{2 \log (8H \depth)}{\lambda^2}$ gives the desired guarantee.
    
    Having demonstrated that $\mathbb{P}_{\Prob} (\hat \pi^* (x) \neq \pi^* (x))$ holds with probability less than $1 / H$, we can finally write
    \begin{align*}
        \reg_H (\Mset) &= \EVP \left[ \sum_{t = 1}^{JN_{\cls}} \max_{k \in [K]} x_t^\top \theta_k^* - r_t \right] + \EVP \left[ \sum_{t = JN_{\cls} + 1}^H \max_{k \in [K]} x_t^\top \theta_k^* - x_t^\top \theta_{\hat \pi^* (x_t)}^* \right] \leq \frac{2 \depth \log(8 H \depth)}{\lambda^2} 
    \end{align*}
    by taking $x_t^\top \theta_k^* - x_t^\top \theta_{\hat \pi^* (x_t)}^* = 0$ in the good event, upper bounding $\max_{k \in [K]} x_t^\top \theta_k^* - r_t \leq 1$ and $JN \leq DN_{\cls}$, and then apply the approximation guarantee $D = \cO( (\log M + 1) C^*_{\lambda}(\Mset))$ from Lemma~\ref{thr:tree_approximation} to get the result.
\end{proof}

\subsection{Proof of Auxiliary Lemmas}


\subsubsection{Proof of Lemma \ref{lemma:mle_traj_concentration}}
The proof of MLE-based confidence set construction is by now standard and can be found in several prior works ({\it e.g.}, \citealt{liu2022partially}). We adapt the proofs from \cite{kwon2024rl} for completeness. 
\begin{proof}
The proof follows a Chernoff bound type of technique: 
    \begin{align*}
        \PP_{\nu^*} &\left( \sum_{o \in \mD} \log \left( \frac{\PP^\pi_{\nu} (o)}{\PP^\pi_{\nu^*} (o)} \right) \ge \Exs_{\nu^*} \left[ \sum_{o \in \mD} \log \left( \frac{\PP^\pi_{\nu} (o)}{\PP^\pi_{\nu^*} (o)} \right)  \right] + \beta \right) \\
        &\le \PP_{\nu^*} \left( \exp \left( \sum_{o \in \mD} \log \left( \frac{\PP^\pi_{\nu} (o)}{\PP^\pi_{\nu^*} (o)} \right) \right) \ge \exp \left( \beta \right) \right) \\
        &\le \Exs_{\nu^*} \left[ \exp \left( \sum_{o \in \mD} \log \left( \frac{\PP^\pi_{\nu} (o)}{\PP^\pi_{\nu^*} (o)} \right) \right) \right] \exp(-\beta). 
    \end{align*}
    The last inequality is by the Markov's inequality. Note that random variables are $o$ in the trajectory dataset $\mathcal{D}$, and $$\Exs_{\nu^*} \left[ \sum_{o \in \mD} \log \left( \frac{\PP^\pi_{\nu} (o)}{\PP^\pi_{\nu^*} (o)} \right)  \right] = - \KL (\PP_{\nu^*} (\mathcal{D}) || \PP_{\nu} (\mathcal{D})) \le 0.$$ 
    Furthermore, 
    \begin{align*}
        \Exs_{\nu^*} \left[ \exp \left( \sum_{o \in \mD} \log \left( \frac{\PP^\pi_{\nu} (o)}{\PP^\pi_{\nu^*} (o)} \right) \right) \right] &= \Exs_{\nu^*} \left[ \Pi_{o \in \mD} \frac{\PP^\pi_{\nu} (o)}{\PP^\pi_{\nu^*} (o)} \right] = 1.
    \end{align*}
    Combining the above, taking a union bound over $\nu \in \Mset$, letting $\beta = \log (M / \delta)$, with probability $1 - \delta$, the inequality in Lemma \ref{lemma:mle_traj_concentration} holds. 
\end{proof}

\subsubsection{Proof of Lemma \ref{lemma:concentration_statistical_distance}}
\begin{proof}
    By the TV-distance and Hellinger distance relation, for any $\iota, \tau$, $\pi$ and $t\in[H]$, 
    \begin{align*}
        D_{\texttt{H}}^2 \left(\PP_{\nu}^{\pi}, \PP_{\nu^*}^{\pi} \right) = 1 - \Exs_{o \sim \PP_{\nu^*}^{\pi}} \left[ \sqrt{\frac{\PP_{\theta}^{\pi} (o)}{\PP_{\theta^*}^{\pi} (o)}} \right]  \le - \log \left( \Exs_{o \sim \PP_{\nu^*}^{\pi}} \left[ \sqrt{\frac{\PP_{\nu}^{\pi} (o)}{\PP_{\nu^*}^{\pi} (o)}} \right] \right).
    \end{align*}
    By the Chernoff bound, 
    \begin{align*}
        \PP_{\nu^*} &\left(  \sum_{o \in \mD} \log \left( \sqrt{\frac{\PP_{\nu}^\pi (o)}{\PP_{\nu^*}^\pi (o)}} \right) \ge  |\mathcal{D}| \cdot \log \Exs_{o \sim \PP_{\nu^*}^\pi } \left[ \sqrt{\frac{\PP_{\nu}^\pi (o)}{\PP_{\nu^*}^\pi (o)}} \right] + \beta \right) \\
        &\le \Exs_{\nu^*} \left[ \frac{\exp \left( \sum_{o \in \mD} \log \left( \sqrt{ \frac{\PP^\pi_{\nu} (o)}{\PP^\pi_{\nu^*} (o)} } \right) \right)}{\exp \left( |\mathcal{D}| \cdot \log \Exs_{ o \sim \PP_{\nu^*}^\pi } \left[\sqrt{\frac{\PP_{\nu}^\pi ( o)}{\PP_{\nu^*}^\pi ( o)}}   \right] \right)} \right] \exp(-\beta) \\
        &= \Exs_{\nu^*} \left[ \frac{\Pi_{o \in \mD} \sqrt{\frac{\PP_{\nu}^\pi ( o)}{\PP_{\nu^*}^\pi ( o)}} }{  \Exs_{\tau \sim \PP_{\theta^*}^\pi } \left[\sqrt{\frac{\PP_{\theta}^\pi (\tau)}{\PP_{\theta^*}^\pi (\tau)}}  \right]^{|\mathcal{D}|} } \right] \exp(-\beta) = \exp(-\beta),
    \end{align*}
    where in the last line, we used the independent property of samples. Thus, again by setting $\beta = \log (M / \delta)$, with probability at least $1 - \eta$, we have
    \begin{align*}
        |\mathcal{D}| \cdot D_{\texttt{H}}^2 (\PP_{\nu}^\pi, \PP_{\nu^*}^\pi ) &\le -\frac{1}{2} \sum_{o \in \mD} \log \left( \frac{\PP^\pi_{\nu} (o)}{\PP^\pi_{\nu^*} (o)} \right) + \beta \\
        &= -\frac{1}{2} \sum_{o \in \mD} \log \left( \frac{\PP^\pi_{\nu} (o)}{\PP^\pi_{\nu^*} ( o)} \right) + \frac{1}{2} \sum_{o \in \mD} \log \left( \frac{\PP^\pi_{\nu} (o)}{\PP^\pi_{\nu^*} (o)} \right) + \beta,
    \end{align*}
    for all $k \in [K]$ and $\nu \in \Mset$. Now we can apply Lemma \ref{lemma:mle_traj_concentration}, and finally have
    \begin{align*}
        D_{\texttt{H}}^2 (\PP_{\nu}^\pi , \PP_{\nu^*}^\pi ) \le \frac{1}{2|\mathcal{D}|} \left(-\sum_{o \in \mD} \log \left( \frac{\PP^\pi_{\nu} (o)}{\PP^\pi_{\nu^*} (o)} \right) + 3 \beta\right).
    \end{align*}
\end{proof}

\clearpage

\section{Additional material}

\subsection{Greedy algorithm}
\label{apx:greedy}

Algorithm~\ref{alg:greedy} provides the pseudocode of a tractable procedure to compute the greedy test for Algorithm~\ref{alg:decision_tree} through a $\lambda / 4$-discretization of the space of thresholds $b$. 

\begin{algorithm}[h]
    \caption{Greedy Test}
    \label{alg:greedy}
    \begin{small}
    \begin{algorithmic}[1]
        \STATE \textbf{input} set of tasks $S$
        \FOR{$k \in [K]$}
            \STATE Define $S^+ (b) := \{ \nu \in S \ | \ \hat\mu(k) \leq b - \lambda / 2 \}$ 
            \STATE Define $S^- (b) := \{ \hat\nu \in S \ | \ \hat\mu(k) > b + \lambda / 2 \}$
            \STATE Compute $M_{k} (b) = \max_{b \in [0, 1]_{\lambda / 4}} \min (|S^+ (b)|,|S^- (b)|)$
        \ENDFOR
        \STATE Extract $(k, b) = \argmax_{k \in [K]} M_{k} (b)$
        \STATE \textbf{output} greedy test $(\mu (k) \leq b)$
    \end{algorithmic}
    \end{small}
\end{algorithm}

\subsection{DT-ECE}
\label{apx:dt_ece}

Algorithm~\ref{alg:DT_ECE} provides the pseudocode of the DT-ECE algorithm, which implements ECE (Algorithm~\ref{alg:ECE}) for a misspecified set of tasks $\hat\Mset$ with a decision tree classifier.

\begin{algorithm}[h]
    \caption{Decision Tree -- Explicit Classify then Exploit}
    \label{alg:DT_ECE}
    \begin{small}
    \begin{algorithmic}[1]
        \STATE \textbf{input} set of tasks $\hat\Mset$, decision tree $\texttt{tree} (\hat\Mset)$, $N_{\cls} = \frac{2 \log(2HD)}{\lambda^2}$
        \STATE Initialize $S_0 = \hat\Mset, t=0$ \hfill \texttt{Explicit Classify}
        \WHILE{$|S_t| > 1$}
            \STATE Extract test $(\mu_k \leq b) = \texttt{tree} (S_t)$
            \STATE{$\mathcal{D}_t\leftarrow$ $N_{\cls}$ i.i.d. samples drawn with $\pi_t = k$}
            \IF{$\frac{1}{N_{\cls}} \sum_{r \in \D_t} r \leq b$}
                \STATE Get $S_{t + 1} \gets \texttt{tree} (S_t, \text{true})$
            \ELSE
                \STATE Get $S_{t + 1} \gets \texttt{tree} (S_t, \text{false})$
            \ENDIF
        \ENDWHILE
        \STATE Extract the classified task $m^* \in S_t$ and execute $\hat{\pi}^* (x) = \argmax_{\pi \in \Pi}  \hat\nu_{m^*} (x, k)$ for the remaining steps \hfill \texttt{Exploit}
    \end{algorithmic}
    \end{small}
\end{algorithm}


\end{document}